\documentclass[runningheads]{llncs}
\usepackage[table,dvipsnames]{xcolor}  
\usepackage[T1]{fontenc}
\usepackage{times}
\usepackage{mathtools}
\usepackage{amsmath, amssymb}
\usepackage{stmaryrd}
\usepackage{float}
\usepackage{hyperref}
\usepackage{cite}
\usepackage{microtype}
\usepackage{subcaption}
\usepackage{booktabs}
\usepackage[scaled=0.9]{inconsolata}
\usepackage{thmtools, thm-restate}
\usepackage{caption}
\usepackage{tikz}
\usepackage{multicol}
\usepackage{orcidlink}
\usepackage{tabularx}
\usepackage[nameinlink]{cleveref}
\usepackage{enumitem}
\usepackage{dirtytalk}
\usepackage{wrapfig}
\usepackage{hyphenat}
\usepackage{rotating}                     

\usepackage[vlined,algoruled,algo2e]{algorithm2e}
\DontPrintSemicolon
\crefname{algocf}{Algorithm}{Algorithms}
\Crefname{algocf}{Algorithm}{Algorithms}

\newcommand{\fcontext}{\mathbb{K} = ( G,M,I )}

\newcommand{\rcontext}{\mathbb{R} = ( G,M,I,\mathsf{R} )}


\newcommand{\minDerivation}[1]{\underline{(#1)^\downarrow}}

\renewenvironment{proof}[1][Proof]{%
  \par\noindent\textbf{#1.} }{\hfill$\Box$\par}
\newcommand{\twiddle}{\mathrel|\joinrel\sim}
\newcommand{\ntwiddle}{\mathrel|\joinrel\not\sim}

\newcommand{\dentails}{\mid\hskip-0.40ex\approx}
\newcommand{\ndentails}{\mid\hskip-0.40ex\not\approx}

\title{Rational Inference in Formal Concept Analysis}
\titlerunning{KLM Reasoning in FCA}
\author{Lucas Carr\inst{1}\orcidlink{0000-0001-7464-8422} \and
Nicholas Leisegang\inst{1}\orcidlink{0000-0002-8436-552X} \and 
Thomas Meyer\inst{1}\orcidlink{0000-0003-2204-6969} \and Sergei Obiedkov\inst{2,3}\orcidlink{0000-0003-1497-4001}}
\authorrunning{Carr, L. et al.}
\institute{University of Cape Town and CAIR, South Africa \\
\email{\{lucas,tmeyer\}@airu.org.za, lsgnic001@myuct.ac.za}
\and TU Dresden, Germany \and Center for Scalable Data Analytics and Artificial Intelligence (ScaDS.AI) Dresden/Leipzig, Germany\\
\email{sergei.obiedkov@tu-dresden.de}}
\begin{document}
\maketitle              
\begin{abstract}
Defeasible conditionals are a form of non-monotonic inference which enable the expression of statements like ``if $\phi$ then normally $\psi$''. The KLM framework defines a semantics for the propositional case of defeasible conditionals by construction of a preference ordering over possible worlds. The pattern of reasoning induced by these semantics is characterised by consequence relations satisfying certain desirable properties of non-monotonic reasoning. In FCA, implications are used to describe dependencies between attributes. However, these implications are unsuitable to reason with erroneous data or data prone to exceptions. Until recently, the topic of non-monotonic inference in FCA has remained largely uninvestigated. In this paper, we provide a construction of the KLM framework for defeasible reasoning in FCA and show that this construction remains faithful to the principle of non-monotonic inference described in the original framework. We present an additional argument that, while remaining consistent with the original ideas around non-monotonic reasoning, the defeasible reasoning we propose in FCA offers a more contextual view on inference, providing the ability for more relevant conclusions to be drawn when compared to the propositional case.
\keywords{Formal Concept Analysis  \and Non-monotonic Reasoning \and Attribute Implications \and Preference Relations}
\end{abstract}
\section{Introduction}
\label{section:introduction}
Formal Concept Analysis (FCA) \cite{ganter2024formal,ganter2016conceptual} is an application of lattice theory to the long-standing philosophical view of concepts as a dual between extension---what may be referred to as an instance of a concept---and intension---what meaning we ascribe to a concept. It turns out that this dualism is quite elegantly modelled by Galois connections over sets of objects and attributes. The strength of this modelling has made FCA a popular tool for Knowledge Representation and Reasoning, information retrieval, rule mining, and ontology engineering. The operator `$\rightarrow$' called an \textit{attribute implication} is used to describe a relationship between two sets of attributes such that $A \rightarrow B$ means ``all those objects which have attributes $A$ also have attributes $B$''. 

Attribute implications are satisfied only if they are consistent with all available information. As a result, any counter-example would prevent a relation between attributes from being recognised. Frequently, however, we may have error-prone data where counter-examples should not be discarded; otherwise, in certain domains, it is useful to express relationships which only partially hold. The canonical example in non-monotonic reasoning of penguins and flying birds is an illustration of this latter point: imagine a context with many objects which may have the attributes `bird' and `flying'; only one of these objects is a penguin, which is a `bird' but does not have the attribute `flying'. It seems appropriate to suggest that expression of the partial correspondence between the attributes `bird' and `flying' is useful, in spite of the penguin. 

\textit{Defeasible conditionals} \cite{KLM2002nonmonotonic,what-does-a-conditional-knowledge-base-entail} are a particular kind of non-monotonic inference relation such that $\texttt{bird} \twiddle \texttt{flies}$ expresses the defeasible information that `Typical birds fly, while non-typical birds may not'. In the original work, these conditionals are given a semantics by a preference-relation $\prec$ over possible worlds such that the previous conditional holds if \texttt{flies} is satisfied by every $\prec$-minimal world also satisfying \texttt{bird}. These semantics induce consequence relations satisfying particular properties argued to be a reasonable account of non-monotonic inference. In particular, \textit{preferential} and \textit{rational} consequence relations are discussed. 

In this paper, we introduce the `$\twiddle$' operator to formal concept analysis from its original setting in propositional logic. We show that the consequence relation induced by the FCA setting is faithful to its propositional counterpart and that an equivalent non-monotonic entailment relation can be defined. On the other hand, we present an argument that the FCA setting offers an intuitive restriction on possible worlds through the formal context. This restriction allows for non-monotonic reasoning, in accordance with rational consequence, which considers a more contextual world-view than the propositional case. 
\subsubsection{Outline of this Paper:}
In \Cref{section:preliminaries}, we provide some basic notions in FCA that are relevant to this work, as well as a more thorough discussion on non-monotonic reasoning and the KLM framework for propositional logic. In \Cref{section:defeasible-fca}, some work is done to make the setting of FCA appropriate for preferential and rational consequence relations. After this, we introduce a defeasible conditional to the logic and show that for each system (preferential and rational) our definitions are sound and complete. \Cref{section:rational-fca} then discusses a more applied approach to defeasible reasoning in FCA and defines a rational closure---a non-monotonic entailment relation---for FCA. \Cref{section:discussion} provides a more elaborate example and discussion of how rational closure in FCA differs from the propositional case. \Cref{section:related-works} is a discussion of related work, while \Cref{section:conclusion} concludes and discusses future work.

\section{Preliminaries}
\label{section:preliminaries}
In the following, we make reference to three distinct languages: propositional logic, FCA's \textit{attribute logic}, and \textit{contextual concept logic}, an extended version of attribute logic. For the setting of propositional logic, lower-case Latin letters are used to denote propositional atoms, while Greek letters denote formulae in the language $\mathcal{L}$ defined in the usual way by $\alpha :: \top \mid \bot \mid a \mid \neg \alpha \mid \alpha \wedge \alpha \mid \alpha \vee \alpha \mid \alpha \rightarrow \alpha \mid \alpha \leftrightarrow \alpha$. The set of all valuations is denoted $\mathcal{U}$; when a valuation $v \in \mathcal{U}$ satisfies a propositional formula $\alpha$, we write $v \Vdash \alpha$. 

In FCA, subsets of objects (\textit{resp.} attributes) are denoted by capital letters, so $A \subseteq G$ (\textit{resp.} $B \subseteq M$). Lowercase letters are used to denote individual elements of these sets; to avoid confusion with propositional variables, membership will be made explicit. We return to Greek letters for the language of compound attributes, and so $\psi$ denotes a compound attribute. 
\subsection{Formal Concept Analysis}
\label{subsection:formal-concept-analysis}
Formal Concept Analysis (FCA) \cite{ganter2024formal,ganter2016conceptual} is a framework for reasoning about conceptual structures in data. We introduce only the basic notions relevant to this work.  The typical starting point in FCA, \textit{a formal context}, is a triple $\fcontext$ comprised of a set of objects $G$, a set of attributes $M$, and a binary relation $I \subseteq G \times M$. An object-attribute pair in the relation, $( g,m) \in I$, is interpreted as saying that \textit{``object $g$ has attribute $m$''}. 

Two maps, or \textit{derivation operators}, provide a way of describing the set of attributes (\textit{resp.} objects) which certain subsets of objects (\textit{resp.} attributes) have in common. They are given as $(\cdot)^\uparrow : 2^G \to 2^M$ and $(\cdot)^\downarrow : 2^M \to 2^G$ so that, for $A \subseteq G$ and $B \subseteq M$, 

\[
    \begin{aligned}
        A^\uparrow &= \{ m \in M \mid \forall g \in A, \ (g,m) \in I \}, \\
        B^\downarrow &= \{ g \in G \mid \forall m \in B, \ (g,m) \in I \}.
    \end{aligned}
\]

An \emph{attribute implication} over $M$ is a pair of attribute sets, denoted $A\rightarrow B$, where $A$ is the \emph{premise} and $B$ is the \emph{conclusion}. The implication is \emph{satisfied} by a set of attributes $C$ if and only if $A \not \subseteq C$ or $B \subseteq C$. In this case, we write $C \models A\rightarrow B$. Satisfaction is generalised to a formal context $( G,M,I)$ if and only if for all $g\in G$ it holds that $\{g\}^\uparrow \models A \rightarrow B$. This is equivalently expressed by $A^\downarrow \subseteq B^\downarrow$ or $B \subseteq A^{\downarrow\uparrow}$. An implication $i$ over $M$ is a \emph{logical consequence} of a set of implications $\mathcal{T}$ if and only if, for every subset $X \subseteq M$, if $X \models \mathcal{T}$ (meaning $X \models j$ for all $j \in \mathcal{T}$) then $X \models i$; we then write $\mathcal{T}\models i$. 
The Armstrong axioms are a syntactic counterpart to semantic entailment of attribute implications \cite{armstrong1974dependency}. 

\subsection{Defeasible Reasoning}
\label{subsection:defeasible-reasoning}
Non-monotonic reasoning is concerned with developing formal reasoning processes whereby a conclusion drawn under a premise can be withdrawn under the addition of another premise. More precisely, defeasible reasoning aims to express notions that while normal things behave a certain way, there may be atypical things that do not. 

Multiple motivations for non-monotonic reasoning exist. The one we consider most pertinent for this work is that monotonicity prevents one from discovering (useful) inferences when there are exceptions to the rule.  

Kraus et al. \cite{KLM2002nonmonotonic} introduce a collection of consequence relations for non-monotonic reasoning in propositional logic, which satisfy certain properties---considered to represent a good account of non-monotonic reasoning---called the \textit{rationality postulates}. We use $\twiddle_X$ to denote a defeasible consequence relation, where the subscript indicates which postulates the relation satisfies. A \textit{preferential consequence relation}, denoted $\twiddle_P$, is one that satisfies the postulates below.  
\begin{table}[H]
    \begin{center}
    \begin{tabularx}{0.8\textwidth}{lXr}
        \textbf{(REF)} & Reflexivity              & $\phi \twiddle \phi$ \\ 
        \textbf{(LLE)} & Left Logical Equivalence & $\phi \equiv \psi$ and $\psi \twiddle \gamma$ implies $\phi \twiddle \gamma$\\
        \textbf{(RW)}  & Right Weakening          & $\psi \rightarrow \gamma$ and $\phi \twiddle \psi$ implies $\phi \twiddle \gamma$ \\
        \textbf{(AND)} & And                      & $\phi \twiddle \psi$ and $\phi \twiddle \gamma$ implies $\phi \twiddle \psi \land \gamma$ \\
        \textbf{(OR)}  & Or                       & $\phi \twiddle \gamma$ and $\psi \twiddle \gamma$ implies $\phi \lor \psi \twiddle \gamma$ \\
        \textbf{(CUT)} & Cut                      & $\phi \land \psi \twiddle \gamma$ and $\phi \twiddle \psi$ implies $\phi \twiddle \gamma$ \\ 
        \textbf{(CM)}  & Cautious Monotonicity        & $\phi \twiddle \psi$ and $\phi \twiddle \gamma$ implies $\phi \land \psi \twiddle \gamma$
    \end{tabularx}
    \end{center}
\end{table}
\vspace{-2em}
Preferential consequence relations are given a semantics by construction of preferential interpretations.

\begin{definition}[Preferential Interpretation]
    \label{definition:preferential-interpretation}
    A \emph{preferential interpretation} $\mathcal{P}$ is a triple $(S, l, \prec)$ with a set of states $S$, a function $l:S \to \mathcal{U}$ mapping states to valuations, and a strict partial order $\prec$ on $S$. $\mathcal{P}$ satisfies $\phi \twiddle \psi$ if and only if the $\prec$-minimal states in the set of states satisfying $\phi$, denoted $\llbracket \phi \rrbracket$, also satisfy $\psi$. 
\end{definition}
The stronger system of \textit{rational consequence relations} ($\twiddle_R$) is introduced in \cite{what-does-a-conditional-knowledge-base-entail}, which satisfies all the above postulates with the addition of
\vspace{-1em}
\begin{table}[H]
    \begin{center}
    \begin{tabularx}{0.8\textwidth}{lXr}
        \textbf{(RM)} & Rational Monotonicity         & $\phi \twiddle \psi$ and $\phi \ntwiddle \neg \gamma$ implies $\phi \land \gamma \twiddle \psi$ 
    \end{tabularx}
    \end{center}
\end{table}
\vspace{-2.5em}
It is obvious that rational consequence relations are a type of preferential relations. The additional property (RM) is read as `If, under normal circumstances, the presence of $\phi$ implies $\psi$ but not the negation of $\gamma$, then I should be able to take $\gamma$ as true without causing the retraction of $\psi$'. Rational consequence relations are given a semantics by means of construction of a preferential structure on interpretations, called \textit{ranked interpretations} \cite{what-does-a-conditional-knowledge-base-entail,casini2019taking}.  
\begin{definition}[Ranked Interpretation]
    A \emph{ranked interpretation} $\mathcal{R}$ is a preferential interpretation $(S, l, \prec)$ where there exists a ranking function $\Omega : S \to \mathbb{N} \cup \{\infty\}$ that is \emph{convex}:
    for every $i \in \mathbb{N}$, if there exists some $s \in S$ with $\Omega(s) = i > 0$, then there exists some $t \in S$ such that $\Omega(t) = 0 \leq j < i$. Then, $s \prec t$ if and only if $\Omega(s) < \Omega(t)$ where $<$ is the usual strict total order on $\mathbb{N} \cup \{\infty\}$.  
\end{definition}

The distinction between preferential and rational consequence is discussed in more depth in \Cref{subsection:defeasible-conditionals-fca}. The convexity property for a ranked interpretation ensures that, when we consider a ranked interpretation as series of strata, there are no empty ranks. The satisfaction of a defeasible conditional by a ranked interpretation follows the same operation as with preferential interpretations where we consider only the $\prec$-minimal valuations. We recall the soundness and completeness results for preferential and rational reasoning from original work \cite{KLM2002nonmonotonic,what-does-a-conditional-knowledge-base-entail}, which link preferential and rational consequence relations to preferential and ranked interpretations, respectively.

\begin{theorem}[Soundness]
    \label{theorem:propositional-soundness}
    If $\mathcal{P}$ is a preferential (\textit{resp.} $\mathcal{R}$ is a ranked) interpretation, the induced consequence relation $\twiddle_\mathcal{P}$ (\textit{resp.} $\twiddle_\mathcal{R}$) is preferential (\textit{resp.} rational). 
\end{theorem}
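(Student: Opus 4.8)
The plan is to prove soundness by verifying, for each of the KLM postulates, that the consequence relation $\twiddle_{\mathcal{P}}$ induced by an arbitrary preferential interpretation $\mathcal{P} = (S, l, \prec)$ actually satisfies that postulate; and then separately to verify that a ranked interpretation $\mathcal{R}$ additionally satisfies (RM). The whole argument hinges on unfolding \Cref{definition:preferential-interpretation}: $\mathcal{P}$ satisfies $\phi \twiddle \psi$ exactly when every $\prec$-minimal state in $\satStates{\phi}$ also lies in $\satStates{\psi}$, so throughout I would translate each postulate's hypotheses and conclusion into set-theoretic statements about the minimal states $\min_\prec(\satStates{\cdot})$ and then reason directly. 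I would set up the notation $\min_\prec(\satStates{\phi})$ for the $\prec$-minimal elements of $\satStates{\phi}$ once at the start, and note the elementary fact that $\satStates{\phi \wedge \psi} = \satStates{\phi} \cap \satStates{\psi}$, $\satStates{\phi \vee \psi} = \satStates{\phi} \cup \satStates{\psi}$, and that $\phi \equiv \psi$ forces $\satStates{\phi} = \satStates{\psi}$, since these identities are what make most postulates fall out immediately.

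Next I would dispatch the easy postulates. (REF) is immediate since minimal states of $\satStates{\phi}$ trivially satisfy $\phi$. (LLE) follows because $\phi \equiv \psi$ gives $\satStates{\phi} = \satStates{\psi}$, hence identical minimal sets. (RW) follows because $\psi \rightarrow \gamma$ valid means $\satStates{\psi} \subseteq \satStates{\gamma}$, so minimal states of $\satStates{\phi}$ that land in $\satStates{\psi}$ also land in $\satStates{\gamma}$. For (AND), a minimal state of $\satStates{\phi}$ lies in both $\satStates{\psi}$ and $\satStates{\gamma}$, hence in their intersection $\satStates{\psi \wedge \gamma}$. (OR) requires a short argument: any $\prec$-minimal state of $\satStates{\phi \vee \psi} = \satStates{\phi} \cup \satStates{\psi}$ must be $\prec$-minimal in at least one of $\satStates{\phi}$ or $\satStates{\psi}$, and in either case the corresponding hypothesis places it in $\satStates{\gamma}$.

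The two genuinely substantive preferential postulates are (CUT) and (CM), and these are where I expect the main obstacle. Both relate the minimal states of $\satStates{\phi}$ to those of $\satStates{\phi \wedge \psi} = \satStates{\phi} \cap \satStates{\psi}$ under the hypothesis $\phi \twiddle \psi$, i.e.\ $\min_\prec(\satStates{\phi}) \subseteq \satStates{\psi}$. The key lemma I would isolate is that this hypothesis forces $\min_\prec(\satStates{\phi} \cap \satStates{\psi}) = \min_\prec(\satStates{\phi})$: every minimal state of $\satStates{\phi}$ already satisfies $\psi$ and so sits in the intersection and remains minimal there, and conversely no new minimal state can appear below them. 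Establishing this set equality cleanly is the crux, because it requires care with the strict partial order (using the smoothness/minimal-state assumptions implicit in the KLM semantics to guarantee minimal states exist and that descending below any state eventually reaches a minimal one). Once this lemma is in hand, (CM) says minimal states of the intersection satisfy $\gamma$ (inherited from $\phi \twiddle \gamma$), and (CUT) runs the inclusion in the other direction.

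Finally, for the ranked case I would add (RM). Here $\phi \ntwiddle \neg\gamma$ means some minimal state of $\satStates{\phi}$ satisfies $\gamma$, i.e.\ $\min_\prec(\satStates{\phi}) \cap \satStates{\gamma} \neq \emptyset$. Using the total preorder induced by the ranking function $\Omega$, I would argue that $\min_\prec(\satStates{\phi \wedge \gamma}) \subseteq \min_\prec(\satStates{\phi})$: because $\prec$ is modular (any two states are comparable in rank), the lowest-ranked states satisfying $\phi \wedge \gamma$ occur at the same rank as the lowest-ranked states satisfying $\phi$, so they are already $\prec$-minimal among $\satStates{\phi}$. Combining this inclusion with $\phi \twiddle \psi$ then yields $\phi \wedge \gamma \twiddle \psi$. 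I would remark that this last step is precisely where the ranked (modular) structure is essential and where the merely partial order of a general preferential interpretation would not suffice, which is the conceptual reason (RM) separates rational from preferential consequence.
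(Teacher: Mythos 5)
Your proposal is correct, but note that the paper never actually proves this statement: \Cref{theorem:propositional-soundness} is recalled from the original KLM work \cite{KLM2002nonmonotonic,what-does-a-conditional-knowledge-base-entail} and cited without proof, so the only meaningful comparison is with the paper's appendix proofs of its FCA analogues, namely \Cref{theorem:soundness} (Soundness of Preferential Contexts), the Soundness of Ranked Contexts theorem, and \Cref{lemma1}. Against those, your decomposition is essentially the same argument transposed from objects to states: a postulate-by-postulate verification; the set identity $\min_\prec(\satStates{\phi \wedge \psi}) = \min_\prec(\satStates{\phi})$ under the hypothesis $\phi \twiddle \psi$ as the crux for (CUT) and (CM), which is exactly the paper's \Cref{lemma1} in state-space form; and the rank/modularity argument giving $\min_\prec(\satStates{\phi \wedge \gamma}) \subseteq \min_\prec(\satStates{\phi})$ for (RM), which mirrors the paper's proof for ranked contexts. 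One point where you are more careful than the paper: you explicitly flag that the ``no new minimal state can appear below'' direction of the key lemma needs smoothness (every non-minimal state in $\satStates{\phi}$ has a minimal state below it). The paper's \Cref{definition:preferential-interpretation} omits this condition, and its proof of \Cref{lemma1} uses it silently in the step asserting that $g \notin \minDerivation{\phi}$ yields some $h \in \minDerivation{\phi}$ with $h \prec g$; that is automatic in the finite setting but a genuine hypothesis in general, and your proof is right to invoke it, since it is part of the original KLM definition of a preferential model. Your observation that (CUT) needs only the unconditional inclusion while (CM) needs the smoothness-dependent one is also accurate and sharper than what the paper records.
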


\begin{theorem}[Completeness]
    \label{theorem:propositional-completeness}
    A consequence relation is preferential (\textit{resp.} rational) if and only if it is defined by a preferential (\textit{resp.} ranked) interpretation. 
\end{theorem}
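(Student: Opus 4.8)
The plan is to treat the two biconditionals in parallel, observing that each factors into a soundness (``if'') direction and a representation (``only if'') direction. The soundness direction is already supplied by \Cref{theorem:propositional-soundness}: any relation of the form $\ptwiddle$ (resp. $\rtwiddle$) satisfies the preferential (resp. rational) postulates. The substance therefore lies entirely in the representation direction, where one starts from an abstract relation $\twiddle$ assumed to satisfy the relevant postulates and must construct an interpretation whose induced relation is exactly $\twiddle$.

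For the preferential case, the canonical construction takes as states pairs $(v, \alpha)$, where $\alpha$ is a formula and $v \in \mathcal{U}$ is an \emph{$\alpha$-normal} valuation, meaning $v \Vdash \alpha$ and $v \Vdash \beta$ for every $\beta$ with $\alpha \twiddle \beta$; the label is $l(v,\alpha) = v$, and the order $\prec$ is set up so that the $\prec$-minimal states whose label satisfies $\alpha$ are precisely the states of the form $(v,\alpha)$. The postulates (REF), (AND), (RW) ensure that $\{\beta : \alpha \twiddle \beta\}$ is suitably closed and that the $\alpha$-normal valuations are exactly the models wanted as minimal $\alpha$-states, while (CUT), (CM) and (OR) are what make $\prec$ well-defined and smooth. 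For the rational case, I would instead build a preference directly on $\mathcal{U}$: set $v \prec w$ whenever some premise $\alpha$ is satisfied by both $v$ and $w$ while $v$ is $\alpha$-normal and $w$ is not. I would then show that the associated indifference relation partitions $\mathcal{U}$ into linearly ordered strata and read off a ranking function $\Omega$ by assigning each stratum a natural number; reindexing the strata to consecutive values yields the convexity demanded of a ranked interpretation.

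The verification then splits into two halves. The first shows that $\phi \twiddle \psi$ forces every $\prec$-minimal $\phi$-state to satisfy $\psi$, which is essentially unwinding the definition of $\phi$-normality. The converse---recovering $\phi \twiddle \psi$ from the purely semantic condition that all $\prec$-minimal models in $\satStates{\phi}$ satisfy $\psi$---is the delicate step, since it requires translating a statement about minimal valuations back into the consequence relation by appealing to (RW), (AND) and the closure of $\{\beta : \phi \twiddle \beta\}$. I expect the main obstacle, however, to be establishing the structural property of the constructed order: smoothness in the preferential case, and above all \emph{modularity} in the rational case. Modularity is exactly the feature that separates ranked from merely preferential interpretations, and proving that the canonical order is modular is precisely where (RM) must be invoked---without it the construction would deliver only a preferential, and not a ranked, interpretation.
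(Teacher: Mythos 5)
The first thing to note is that the paper itself contains \emph{no} proof of this statement: it is recalled as background, with the proof deferred to the original sources \cite{KLM2002nonmonotonic,what-does-a-conditional-knowledge-base-entail}, and in the appendix the authors explicitly omit it as a well-known result when they invoke it for their own completeness theorem. So the only meaningful benchmark is the original representation theorems of Kraus, Lehmann and Magidor, and at the level of strategy your outline does follow the same route as those proofs: factor out soundness (your appeal to \Cref{theorem:propositional-soundness}) and, for the converse, build a canonical interpretation out of normal valuations.

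As a proof, however, the proposal has concrete gaps, and they sit exactly where the original arguments do their real work. In the preferential case you never define the order on the pairs $(v,\alpha)$; you only postulate the property it should have, namely that the $\prec$-minimal states of $\satStates{\alpha}$ are precisely those of the form $(v,\alpha)$. Writing down that order and proving it is a smooth strict partial order with exactly this minimality property \emph{is} the KLM argument; stating the target is not a step towards it. In the rational case there is an outright flaw: taking the state set to be all of $\mathcal{U}$ cannot represent every rational relation, because a rational relation may satisfy $\alpha \twiddle \bot$ for a satisfiable $\alpha$ (such relations arise from ranked interpretations whose states omit some valuations). Under your construction $\satStates{\alpha}$ would then be non-empty and, over a finite signature, would have $\prec$-minimal elements, all of which would have to satisfy $\bot$ --- impossible. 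You must first discard these ``impossible'' valuations before ordering what remains. Finally, the asymmetry, transitivity and modularity of your relation $v \prec w$, and the identification of the $\prec$-minimal elements of $\satStates{\alpha}$ with the $\alpha$-normal valuations, constitute the substance of Lehmann and Magidor's proof via (RM); you correctly name this as the main obstacle, but naming it leaves it undone. One point in your favour: the step you call delicate --- recovering $\phi \twiddle \psi$ from the condition that every $\phi$-normal valuation satisfies $\psi$ --- is in fact the easy part and needs no reference to the constructed order. The set $C(\phi) = \{\beta \mid \phi \twiddle \beta\}$ contains $\phi$ by (REF) and is deductively closed by (AND) and (RW), so the $\phi$-normal valuations are exactly the models of $C(\phi)$; if they all satisfy $\psi$, then $C(\phi)$ classically entails $\psi$, and compactness together with closure yields $\psi \in C(\phi)$, i.e.\ $\phi \twiddle \psi$.
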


As a final point, we discuss adequate notions of non-monotonic entailment. Any entailment relation whereby all models of a knowledge base are considered represents a Tarskian notion of consequence, and remains monotonic. 

Lehmann and Magidor \cite{what-does-a-conditional-knowledge-base-entail} propose an approach where consideration is restricted to a subset of ranked interpretations. One method of selecting which subset should be considered is due to Giordano et al. \cite{giordano2015semantic}. There, it is shown that a preference relation $\preceq_\mathcal{R}$ can be constructed so that, if $\mathcal{R}_1$ and $\mathcal{R}_2$ are ranked interpretations satisfying the set $\Lambda$ of defeasible conditionals, then $\mathcal{R}_1 \preceq_\mathcal{R} \mathcal{R}_2$ if and only if $\mathcal{R}_1(u) \leq \mathcal{R}_2(u)$ for all valuations $v \in \mathcal{U}$. 

The argument for this preference is based on the \textit{presumption of typicality}: that, in the absence of evidence to the contrary, we regard things as typical \cite{what-does-a-conditional-knowledge-base-entail}. In the finite case, this preference relation has a minimum element denoted $\mathcal{R}_{RC}$ \cite{what-does-a-conditional-knowledge-base-entail}, which defines the \textit{rational closure} of $\Lambda$ (independently, Pearl \cite{pearlSystemZ} proposed \textit{System Z}, which turns out to be rational closure). This entailment relation is non-monotonic, as addition of new information may result in changes to the preference relation internal to each ranked interpretation and result in a different $\preceq_\mathcal{R}$-minimum ranked context. 

The \texttt{BaseRank} algorithm \cite{what-does-a-conditional-knowledge-base-entail,freund1998preferential} partitions the \textit{materialisation}---the process of turning a defeasible knowledge base into its classical counterpart---of a knowledge base into layers of exceptionality, where higher ranks indicate more exceptional statements. Already classical statements $\phi$ in the knowledge base are considered defeasible statements of the form $\phi \twiddle \bot$. These statements are assigned the infinite rank, indicating they should not be removed. 
\texttt{RCProp} checks entailment of a query from this partition: lower ranks are removed until the query is no longer considered exceptional, after which entailment is determined classically with respect to the remaining statements, alongside statements in the infinite rank. 
It was shown by Giordano et al. \cite{giordano2015semantic} that the ranked interpretation induced by \texttt{BaseRank} is the $\preceq_\mathcal{R}$-minimum element. 

\begin{tiny}
\begin{figure}
\noindent\begin{minipage}[t]{0.51\textwidth}
    \begin{algorithm2e}[H]
		      \caption{\texttt{BaseRank}}
            \label{algorithm:baserank}
		      \KwIn{A set of defeasible conditionals $\mathcal{K}$} 
		      \KwOut{An ordered tuple $(R_0,\ldots,R_{n-1},R_\infty,n)$}
		    $i := 0$ \;
            $E_0:=\{\phi\rightarrow\psi\mid \phi\twiddle\psi \in\mathcal{K}\}$ \;
            \While{$E_{i-1} = E_1$} {
                $E_{i+1} \coloneqq \{\phi \rightarrow \psi \in E_i \mid E_i \models \neg \phi \}$\; 
                $R_i \coloneqq E_i \setminus E_{i+1}$\;
                $i \coloneqq i + 1$\; }
            $R_\infty \coloneqq E_{i-1}$\;
            \eIf {{$E_{i-1} = \emptyset$}}{
                $n \coloneqq i - 1$\; }
            {
                $n \coloneqq i$; }
            \Return {$(R_0, \ldots, R_{n-1}, R_\infty, n)$}\;
    \end{algorithm2e}
\end{minipage}
\hfill
\begin{minipage}[t]{0.47\textwidth}
    \begin{algorithm2e}[H]
            \caption{\texttt{RCProp}}
            \label{algorithm:RCProp}
            \KwIn{A set of defeasible conditionals $\mathcal{K}$}
            \KwIn{ A defeasible conditional $\phi \twiddle \psi$}
            \KwOut{True if $\mathcal{K} \dentails_\texttt{RC} \phi \twiddle \psi$ otherwise False}
             $(R_0, \ldots, R_{n-1}, R_\infty, n) \coloneqq \texttt{BaseRank}(\mathcal{K})$\;
             $i \coloneq 0$\; 
             $R \coloneqq \bigcup\limits_{j=0}^{j < n} R_j$\; 
            \While {$R_\infty \cup R \models \neg \phi$ and $R \not = \emptyset$}{
                 $R \coloneq R \setminus R_i$\; 
                 $i \coloneq i+1$\;
             }
            \Return {$R_\infty \cup R \models \phi \rightarrow \psi$}\;
    \end{algorithm2e}
\end{minipage}
\end{figure}
\end{tiny}

\section{Defeasible Reasoning in FCA}
\label{section:defeasible-fca}

In the previous section we provided a brief account of the KLM approach to defeasible reasoning in propositional logic. This section proposes a relatively simple translation of these ideas into FCA. We take a scaffolded approach and first describe the setting required for preferential reasoning. It is then demonstrated why preferential consequence is undesirable, and how a small adjustment to the preference relation results in the target of rational inference. The majority of the proofs for theorems, propositions, and lemmas in this section are found in the Appendix of the full version of this paper. 

Before this begins, some necessary changes to the typical FCA setting require discussion. Perhaps the reader might have some suspicions regarding what a rational consequence relation might mean in FCA, given that certain postulates -- namely (OR) and (RM) -- require a logic which has negation and disjunction, the expressivity for which is beyond the attribute logic underlying FCA.
\subsection{Contextual Attribute Logic}
\label{subsection:compound-attributes}
In the usual setting, implications are defined over sets of attributes $\{m_1,\ldots,m_n\}$ with a conjunctive view. Essentially, implications can be considered analogous to conjunctions of definite Horn clauses which share the same negated literals \cite{ganter2016conceptual}. A result of this is that the expressivity for disjunction or negation on either side of the implication is not present. Moreover, FCA typically restricts the meaning of the incidence relation to positive information. Several attempts to extend the expressivity of FCA's attribute logic are well documented \cite{ganter2016conceptual,rodriguez2014negative,perez2021new,ganter1999contextual}, but a single approach is yet to be widely accepted.

We adopt the idea of \textit{compound attributes}, due to Ganter \cite{ganter1999contextual}. Compound attributes allow for the construction of more complex formulae defined by the extension of (normal) attributes and other compound attributes. The language, denoted $\mathcal{L}(M)$, is defined recursively by $\phi :: \{m\} \mid \neg \phi \mid \phi_1 \land \phi_2 \mid \phi_1 \lor \phi_2$. A satisfaction relation is defined as follows:
\begin{definition}[Compound Attributes]
\label{definition:compound-attributes}
	Given a context $( G,M,I)$, $\Vdash$ is a \emph{satisfaction relation} between objects and compound attributes in the language $\mathcal{L}(M)$, where $g \Vdash \phi$ is interpreted as $g \in \phi^\downarrow$
	\begin{itemize}
		\item[--] $g \Vdash \{m\}$ if and only if $g \in \{m\}^\downarrow$
		\item[--] $g \Vdash \neg \phi$ if and only if $g \in G \setminus \phi^\downarrow$
		\item[--] $g \Vdash \phi_1 \lor \phi_2$ if and only if $g \in \phi_1^\downarrow \cup \phi_2^\downarrow$
		\item[--] $g \Vdash \phi_1 \land \phi_2$ if and only if $g \in \phi_1^\downarrow \cap \phi_2^\downarrow$
	\end{itemize}
	where $g\in G$, $m \in M$, and $\phi \in \mathcal{L}(M)$.
\end{definition}
%

This definition allows for arbitrarily complex compound attributes to be derived from a base set $M$ of normal attributes, analogous to propositional atoms. In order to talk about which objects satisfy a more complex compound attribute, the attribute-to-object operator is extended to consider compound attributes: $(\cdot)^\downarrow : \mathcal{L}(M) \to 2^G$. Then $\phi^\downarrow \coloneqq \{g \in G \mid g \Vdash \phi\}$. To determine whether an implication $\phi \rightarrow \psi$ over $\mathcal{L}(M)$ is satisfied by a context, it is then sufficient to show that $\phi^\downarrow \subseteq \psi^\downarrow$. 
\clearpage

\begin{example}
	In the context below, $\phi = \texttt{Rain} \lor \texttt{Wind}$ is a compound attribute with $\phi^\downarrow = \{ \texttt{Day 2, Day 3\}}$. The implication $\phi \rightarrow \texttt{Cold}$ holds in the context since $\phi^\downarrow \subseteq \texttt{Cold}^\downarrow$. 
    \vspace{1em}

	\noindent\begin{minipage}{\linewidth} 
		\centering
		\begin{tabular}{|c|c|c|c|c|>{\columncolor{teal!20}}c|}
			\hline
			               & \rotatebox{90}{\texttt{Sun}} & \rotatebox{90}{\texttt{Rain }} & \rotatebox{90}{\texttt{Wind }} & \rotatebox{90}{\texttt{Cold }} & \rotatebox{90}{$\phi$} \\
			\hline
			\multicolumn{6}{c}{} 
            \vspace{-0.9em}                                                                                                                                      \\
			\hline
			\texttt{Day 1} & $\times$                     &                                &                                &                                &                        \\
			\hline
			\texttt{Day 2} &                              &                                & $\times$                       & $\times$                       & $\times$               \\
			\hline
			\texttt{Day 3} &                              & $\times$                       &                                & $\times$                       & $\times$               \\
			\hline
			\texttt{Day 4} &                              &                                &                                & $\times$                       &                        \\
			\specialrule{1.25pt}{0pt}{0pt}
		\end{tabular}
		\label{tab:my_label}
	\end{minipage}

\end{example}

\subsection{Defeasible Conditionals over Compound Attributes}
\label{subsection:defeasible-conditionals-fca}
We now introduce the `$\twiddle$' as a connective over compound attributes. Then, $\phi \twiddle \psi$ is to be interpreted as ``The typical objects that satisfy the (compound) attribute $\phi$ also satisfy $\psi$''. Implicit in this reading is that a suitable structure for defining a semantics must have an explicit notion of when an object is `typical' and when it is not; i.e., a preference relation. It is then obvious that a formal context alone is insufficient. 

We introduce a \textit{preferential context} as an extension to the formal context, which provides such a relation on objects. \Cref{definition:minimisation} provides a useful notation for referring the minimal (preferred) objects from a set.  






\begin{definition}[Minimisation]
	\label{definition:minimisation}
    Let $A$ be a set equipped with a strict partial order $\prec$. The \emph{minimisation} of ${B}$ is a selection function $\underline{B} : 2^A \mapsto 2^A$ that maps each subset $B\subseteq A$ to the set of all $\prec$-minimal elements of $B$, i.e., 
    \[\underline{B} = \{g \in B \mid \nexists h \in B \text{ such that } h \prec g\}\]   
\end{definition}

We can then introduce a preferential context, as well as a notion for when a defeasible conditional is satisfied

\begin{definition}[Preferential Context]
	\label{definition:preferential-context}
	A \emph{preferential context} $\mathbb{P} = (G,M,I,\prec)$ is a quadruple where  $(G,M,I)$ is a formal context and $\prec$ is a strict partial order over the set of objects $G$ representing a preference relation. 
    
    A \textit{defeasible conditional} $\psi \twiddle \phi$ over the set $\mathcal{L}(M)$ of compound attributes is \emph{satisfied} by a preferential context $\mathbb{P}$ if and only if for each object $g \in \minDerivation{\psi}$ it is also the case that $g \Vdash \phi$; which can be expressed equivalently as $\minDerivation{\psi} \subseteq \phi^\downarrow$, or $\phi^\downarrow \subseteq \underline{(\psi)}^{\downarrow \uparrow}$.
\end{definition}

\begin{example}
	\label{example:preferential-context}
	If we restrict consideration to the classical context in \Cref{fig:context}, the implication $\texttt{Non-metal} \rightarrow \texttt{Gas}$ is not satisfied by the context because \texttt{Carbon} is a counter-example. Instead, we consider the preferential context where the order is given by the Hasse diagram in \Cref{fig:order}, and notice that the defeasible counterpart to the classical implication, $\texttt{Non-metal} \twiddle \texttt{Gas}$, is satisfied by the preferential context.

	\vspace{1em}

	\noindent\begin{minipage}{\linewidth} 
		\centering
		\begin{minipage}{0.5\linewidth}
			\centering
            \begin{small}
			\begin{tabular}{|l|l|l|l|l|l|l|}
				\hline
				                  & \rotatebox{90}{\texttt{Gas}}
				                  & \rotatebox{90}{\texttt{Non-metal }}
				                  & \rotatebox{90}{\texttt{Reactive}}
				                  & \rotatebox{90}{\texttt{Essential }}
				                  & \rotatebox{90}{\texttt{Solid}}
				                  & \rotatebox{90}{\texttt{Abundant}}                                                         \\
				\hline
				\multicolumn{6}{c}{} \vspace{-0.9em}                                                                          \\
				\hline
				\texttt{Helium}   & $\times$                           & $\times$ &          &          &          & $\times$ \\
				\hline
				\texttt{Hydrogen} & $\times$                           & $\times$ & $\times$ & $\times$ &          & $\times$ \\
				\hline
				\texttt{Carbon}   &                                    & $\times$ &          & $\times$ & $\times$ &          \\
				\specialrule{1.25pt}{0pt}{0pt}
			\end{tabular}
			\captionof{figure}{A context of elements}
			\label{fig:context}
            \end{small}
		\end{minipage}%
        \hfill
		\begin{minipage}{0.5\linewidth} 
			\centering
			\vspace{0em}
            \begin{small}
			\begin{tikzpicture}[
					scale=1,
					every node/.style={
							minimum size=0.5cm,
							inner sep=0pt,
						},
					node distance=1cm,
					rounded corners=5mm 
				]

				\draw[black] (-2,0) rectangle (2,3);

				\node (Bene Gesserit) at (1,1.5) {\small \texttt{Hydrogen}};
				\node (Spacing Guild) at (-1,1) {\small \texttt{Helium}};
				\node (Fremen) at (-1,2) {\small \texttt{Carbon}};

				\draw (Fremen) -- (Spacing Guild);

			\end{tikzpicture}
			
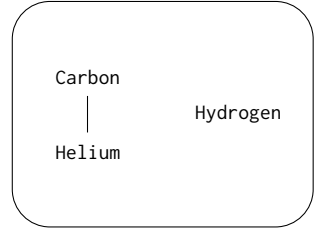
\captionof{figure}{Preference over objects}
			\label{fig:order}
            \end{small}
		\end{minipage}
	\end{minipage}
\end{example}

We let $\twiddle_\mathbb{P}$ refer to the consequence relation induced by the semantics from \Cref{definition:preferential-context} for defeasible conditionals. It should be quite clear that $(\phi,\psi) \in \twiddle_\mathbb{P}$ (or, $\phi \twiddle_\mathbb{P} \psi$) if and only if $\mathbb{P}$ satisfies $\phi \twiddle \psi$.

It remains to be shown that these defeasible conditions are a faithful translation of the original KLM framework. For our soundness result we now show that every relation $\twiddle_\mathbb{P}$ induced by a preferential context $\mathbb{P}$ is a preferential consequence relation. The following lemma is a reasonably obvious result of the semantics we have given to defeasible conditionals, and is helpful for the soundness result. 

\begin{restatable}{lemma}{Equivalence}
	\label{lemma1}
	If $\mathbb{P}$ is a preferential context, then it satisfies the defeasible conditional $\phi \twiddle \psi$ if and only if $\minDerivation{\phi} = \underline{\phi^\downarrow \cap \psi^\downarrow}$.
\end{restatable}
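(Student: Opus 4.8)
The plan is to unwind both sides of the biconditional into statements purely about the minimisation operator $\underline{(\cdot)}$ acting on derivation sets, and then reason directly about $\prec$-minimal elements. Writing $X := \phi^\downarrow$ and $Y := \psi^\downarrow$, recall that $\minDerivation{\phi} = \underline{X}$ and that, by \Cref{definition:preferential-context}, $\mathbb{P}$ satisfies $\phi \twiddle \psi$ exactly when $\underline{X} \subseteq Y$. So the lemma reduces to the order-theoretic equivalence $\underline{X} \subseteq Y \iff \underline{X} = \underline{X \cap Y}$, which I would prove by establishing the two directions separately.

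The direction $(\Leftarrow)$ is immediate: if $\underline{X} = \underline{X \cap Y}$, then $\underline{X} = \underline{X \cap Y} \subseteq X \cap Y \subseteq Y$, using only the fact that the minimisation of any set is a subset of that set. This half needs no further hypotheses.

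For $(\Rightarrow)$, assume $\underline{X} \subseteq Y$ and prove the set equality by mutual inclusion. The inclusion $\underline{X} \subseteq \underline{X \cap Y}$ is the routine half: any $g \in \underline{X}$ lies in $X$ and, by hypothesis, in $Y$, hence in $X \cap Y$; and since $g$ has no $\prec$-predecessor in the larger set $X$, it can have none in the subset $X \cap Y$, so $g$ is $\prec$-minimal in $X \cap Y$. The reverse inclusion $\underline{X \cap Y} \subseteq \underline{X}$ carries the real content. Given $g \in \underline{X \cap Y}$, I would argue by contradiction: were $g$ not $\prec$-minimal in $X$, there would be some $h \in X$ with $h \prec g$, and I would then descend to a $\prec$-minimal element $m$ of $X$ with $m \prec g$. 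Such an $m$ satisfies $m \in \underline{X} \subseteq Y$, so $m \in X \cap Y$ with $m \prec g$, contradicting the $\prec$-minimality of $g$ in $X \cap Y$.

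The step I expect to be the crux is exactly this descent from an arbitrary $h \prec g$ to a genuinely $\prec$-minimal $m \in \underline{X}$ below $g$, as it is where well-foundedness of $\prec$ is required. A quick counterexample shows the hypothesis cannot simply be dropped: if $X$ carries an infinite descending $\prec$-chain then $\underline{X} = \emptyset \subseteq Y$ holds vacuously while $\underline{X \cap Y}$ may be nonempty. I would therefore invoke the standing assumption that the context is finite — making $\prec$ trivially well-founded — or, in the general setting, appeal to a smoothness condition guaranteeing that every element of $X$ dominates some element of $\underline{X}$. With that in hand the contradiction above goes through, and the two inclusions combine to give $\underline{X} = \underline{X \cap Y}$.
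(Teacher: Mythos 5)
Your proof is correct and follows essentially the same route as the paper's: both reduce satisfaction to $\underline{\phi^\downarrow} \subseteq \psi^\downarrow$ and prove the equality $\underline{\phi^\downarrow} = \underline{\phi^\downarrow \cap \psi^\downarrow}$ by the same two inclusions, with the reverse inclusion argued by producing a minimal element of $\phi^\downarrow$ strictly below $g$. The only difference is in your favour: the paper silently asserts that a non-minimal $g \in \phi^\downarrow$ lies above some $h \in \underline{\phi^\downarrow}$, whereas you correctly identify that this descent needs finiteness (or KLM-style smoothness) and give the infinite-descending-chain counterexample showing the equivalence genuinely fails without such a hypothesis --- a condition the paper's definition of preferential context omits.
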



    

\begin{restatable}[Soundness of Preferential Contexts]{theorem}{SoundnessP}
	\label{theorem:soundness}
	For any preferential context $\mathbb{P}$, the induced relation $\twiddle_\mathbb{P}$ defines a preferential consequence relation, i.e., $\twiddle_\mathbb{P}$ is closed under (REF), (LLE), (RW), (AND), (OR), (CUT), and (CM).
\end{restatable}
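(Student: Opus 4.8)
The plan is to unfold the semantic definition into a statement about extensions and then verify each postulate as an elementary set-inclusion argument. Conceptually, a preferential context $\mathbb{P}$ is nothing but a preferential interpretation in the sense of \Cref{definition:preferential-interpretation} --- take the states to be $G$ and label each object $g$ by the valuation $\{g\}^\uparrow$ --- so in principle soundness could be inherited from \Cref{theorem:propositional-soundness}; I prefer the direct verification, which keeps the argument inside the contextual logic. Writing $\phi \twiddle_{\mathbb{P}} \psi$ iff $\minDerivation{\phi} \subseteq \psi^\downarrow$, the key ingredient is that the extension operator is a Boolean homomorphism: from \Cref{definition:compound-attributes} we have $(\phi \land \psi)^\downarrow = \phi^\downarrow \cap \psi^\downarrow$, $(\phi \lor \psi)^\downarrow = \phi^\downarrow \cup \psi^\downarrow$, and $(\neg\phi)^\downarrow = G \setminus \phi^\downarrow$. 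I read the classical side-conditions contextually, so $\phi \equiv \psi$ means $\phi^\downarrow = \psi^\downarrow$ and the premise $\psi \rightarrow \gamma$ of (RW) means $\psi^\downarrow \subseteq \gamma^\downarrow$. With these identifications every postulate becomes a claim about the minimisation operator of \Cref{definition:minimisation} acting on subsets of $G$.

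Four of the postulates are then immediate. (REF) is just $\minDerivation{\phi} \subseteq \phi^\downarrow$, which holds since $\underline{B} \subseteq B$ for every $B$. (LLE) follows because $\phi^\downarrow = \psi^\downarrow$ forces $\minDerivation{\phi} = \minDerivation{\psi}$, and (RW) chains $\minDerivation{\phi} \subseteq \psi^\downarrow \subseteq \gamma^\downarrow$. For (AND), the two hypotheses give $\minDerivation{\phi} \subseteq \psi^\downarrow$ and $\minDerivation{\phi} \subseteq \gamma^\downarrow$, hence $\minDerivation{\phi} \subseteq \psi^\downarrow \cap \gamma^\downarrow = (\psi \land \gamma)^\downarrow$. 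The only member of this group needing a genuine lemma is (OR): here I would first prove the set-theoretic fact $\underline{X \cup Y} \subseteq \underline{X} \cup \underline{Y}$ --- any element minimal in $X \cup Y$ that lies in $X$ is already minimal in $X$ --- and then compute $\minDerivation{\phi \lor \psi} = \underline{\phi^\downarrow \cup \psi^\downarrow} \subseteq \minDerivation{\phi} \cup \minDerivation{\psi} \subseteq \gamma^\downarrow$.

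The substantive cases are (CUT) and (CM), and both hinge on \Cref{lemma1}. The key move is that the hypothesis $\phi \twiddle \psi$, via \Cref{lemma1}, lets me replace $\minDerivation{\phi}$ by $\underline{\phi^\downarrow \cap \psi^\downarrow} = \minDerivation{\phi \land \psi}$, collapsing the antecedents $\phi$ and $\phi \land \psi$ to the same set of minimal objects. For (CUT) I then read the remaining hypothesis $\phi \land \psi \twiddle \gamma$ as $\minDerivation{\phi \land \psi} \subseteq \gamma^\downarrow$ and conclude $\minDerivation{\phi} = \minDerivation{\phi \land \psi} \subseteq \gamma^\downarrow$, that is $\phi \twiddle \gamma$; for (CM) I use the same identity together with $\minDerivation{\phi} \subseteq \gamma^\downarrow$ to obtain $\minDerivation{\phi \land \psi} = \minDerivation{\phi} \subseteq \gamma^\downarrow$.

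I expect the main obstacle to be exactly this reliance on \Cref{lemma1} for (CUT) and (CM): the naive inclusions do not relate the minimal objects of $\phi^\downarrow$ to those of $(\phi \land \psi)^\downarrow$ on their own, since shrinking a set can both destroy old minimal elements and create new ones. \Cref{lemma1} is precisely what rules out new minimal elements appearing in $\phi^\downarrow \cap \psi^\downarrow$ strictly below the minimal objects of $\phi^\downarrow$, and its own proof quietly depends on $\prec$ being well-founded on $G$ (automatic in the finite setting assumed here), ensuring every nonempty extension has $\prec$-minimal elements. Beyond that caveat the argument is routine bookkeeping over the four Boolean identities and the union lemma, so I would state those two auxiliary facts first and then dispatch the postulates in the order above.
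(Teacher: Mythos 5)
Your proposal is correct and takes essentially the same route as the paper's own proof: the easy postulates via direct set inclusions, (OR) via the observation that a $\prec$-minimal element of $\phi^\downarrow \cup \psi^\downarrow$ is already minimal in whichever extension it belongs to, and (CUT)/(CM) by invoking \Cref{lemma1} to identify $\minDerivation{\phi}$ with $\underline{\phi^\downarrow \cap \psi^\downarrow}$. Your side remark that \Cref{lemma1} implicitly relies on well-foundedness (smoothness) of $\prec$ is a fair observation about the paper's own lemma, but it does not alter the argument in the finite setting and does not constitute a difference in approach.
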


We now show that for any preferential consequence relation $\twiddle_\mathcal{P}$, there exists a preferential context $\mathbb{P}$ such that the induced consequence relation $\twiddle_\mathbb{P}$ coincides exactly with $\twiddle_\mathcal{P}$. This yields our completeness result. Our approach relies on the completeness result for preferential interpretations from \cite{KLM2002nonmonotonic}, as recalled in \Cref{theorem:propositional-completeness}. 

As a preliminary step, we show that for any preferential interpretation $\mathcal{P}$ over the set $\mathcal{M}$ of propositional atoms, one can construct a preferential context $\mathbb{P}$ over the attribute set $M$ such that the consequence relation $\twiddle_\mathbb{P}$ induced by $\mathbb{P}$ corresponds to $\twiddle_\mathcal{P}$ via a translation between propositional formulae and compound attributes. In this sense, $\twiddle_\mathbb{P}$ reflects the pattern of reasoning described by $\twiddle_\mathcal{P}$. 

The translation is described by construction of a \textit{derived context}. To avoid ambiguity, we adopt the following notational convention: Greek letters $\phi, \psi, \ldots$ denote compound attributes in the language $\mathcal{L}(M)$, whereas $\phi_\mathcal{M}, \psi_\mathcal{M}, \ldots$ denote propositional formulae over the set $\mathcal{M}$ of propositional atoms.

\begin{definition}[Derived Context]
    \label{definition:interpretation-to-context}
    For a preferential interpretation $\mathcal{P} = ( S, l, \prec)$ over propositional atoms $\mathcal{M}$, the \emph{derived preferential context} is given by $\mathbb{P}^\mathcal{P} = (G, M, I, \prec_*)$
    where $G = \{g_s \mid s \in S\}$, 
    $M = \mathcal{M}$, 
    $(g_s,m) \in I$ if and only if $l(s) \Vdash m_\mathcal{M}$, and 
    $g_s \prec_* g_t$ if and only if $s \prec t$. 
\end{definition}

The following lemmas establish the link, in terms of consequence relations, between preferential interpretations and derived preferential contexts.  

\begin{restatable}{lemma}{Correspondence}
    \label{lemma:correspondence-propositional-compound}
    For any propositional formula $\phi_\mathcal{M}$, $l(s) \Vdash \phi_\mathcal{M}$ if and only if $g_s \Vdash \phi$. 
\end{restatable}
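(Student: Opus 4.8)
The plan is to proceed by structural induction on the propositional formula $\phi_\mathcal{M}$, where throughout $\phi \in \mathcal{L}(M)$ denotes its image under the natural translation that sends each atom $m_\mathcal{M}$ to the singleton $\{m\}$ and commutes with the Boolean connectives. Since $\mathcal{L}(M)$ provides only $\neg$, $\land$, and $\lor$, I would first fix the translation on the remaining connectives of $\mathcal{L}$ by their standard abbreviations, e.g.\ $\alpha_\mathcal{M} \rightarrow \beta_\mathcal{M}$ maps to $\neg \alpha \lor \beta$, with $\leftrightarrow$ handled analogously and $\top$, $\bot$ sent to a tautological resp.\ contradictory compound attribute. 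This reduces the induction to the three primitive connectives plus atoms.

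For the base case $\phi_\mathcal{M} = m_\mathcal{M}$, we have $\phi = \{m\}$, and I would chain the equivalences $l(s) \Vdash m_\mathcal{M}$ iff $(g_s, m) \in I$ iff $g_s \in \{m\}^\downarrow$ iff $g_s \Vdash \{m\}$. The first step is exactly the definition of $I$ in the derived context (\Cref{definition:interpretation-to-context}), the second is the definition of $(\cdot)^\downarrow$ applied to a single attribute, and the third is the atomic clause of \Cref{definition:compound-attributes}.

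For the inductive steps I would treat each connective using the corresponding clause of \Cref{definition:compound-attributes} together with the induction hypothesis. For negation, $l(s) \Vdash \neg \alpha_\mathcal{M}$ iff $l(s) \not\Vdash \alpha_\mathcal{M}$ iff (by the hypothesis) $g_s \notin \alpha^\downarrow$ iff $g_s \in G \setminus \alpha^\downarrow$ iff $g_s \Vdash \neg \alpha$. For conjunction, $l(s) \Vdash \alpha_\mathcal{M} \land \beta_\mathcal{M}$ iff $l(s)$ satisfies both conjuncts iff (by the hypothesis) $g_s \in \alpha^\downarrow$ and $g_s \in \beta^\downarrow$ iff $g_s \in \alpha^\downarrow \cap \beta^\downarrow$ iff $g_s \Vdash \alpha \land \beta$; the disjunctive case is identical with $\cup$ in place of $\cap$.

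The argument is essentially routine, so the main obstacle, such as it is, lies in the mismatch between the two languages rather than in any single inductive step: one must make the translation of the non-primitive connectives precise and verify that both $\Vdash$ relations validate the same Boolean identities, so that treating $\rightarrow$, $\leftrightarrow$, $\top$, and $\bot$ as abbreviations is sound. Once this is settled, the equivalence propagates mechanically from atoms to arbitrary formulae.
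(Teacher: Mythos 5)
Your proof is correct and takes essentially the same route as the paper's: a structural induction grounding the atomic case in the definition of the derived context (\Cref{definition:interpretation-to-context}) and then propagating through the clauses of \Cref{definition:compound-attributes} for $\neg$, $\land$, and $\lor$. If anything, yours is more careful than the paper's own proof, which treats negation and disjunction only as applied to atoms (and argues only one direction there), and which never addresses the translation of $\rightarrow$, $\leftrightarrow$, $\top$, and $\bot$ that you handle explicitly via standard abbreviations.
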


    

\begin{restatable}{lemma}{DefeasibleCorrespondence}
    \label{lemma:correspondence-conditionals-propositional-preferential}
    A defeasible implication $\phi_\mathcal{M} \twiddle \psi_\mathcal{M}$ over $\mathcal{M}$ is satisfied by a preferential interpretation $\mathcal{P}$ if and only if the compound attribute counterpart $\phi \twiddle \psi$ is satisfied in the derived preferential context $\mathbb{P}^\mathcal{P}$. 
\end{restatable}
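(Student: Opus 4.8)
The plan is to reduce both satisfaction conditions to a single statement about $\prec$-minimal elements and transport it across the state-to-object correspondence. The key observation is that the assignment $s \mapsto g_s$ is a bijection between $S$ and $G$ (the objects of $\mathbb{P}^\mathcal{P}$ are indexed by the states of $\mathcal{P}$), and by \Cref{definition:interpretation-to-context} we have $g_s \prec_* g_t$ exactly when $s \prec t$, so this bijection is an order-isomorphism between $(S,\prec)$ and $(G,\prec_*)$. First I would invoke \Cref{lemma:correspondence-propositional-compound} to see that the same bijection carries the satisfying states $\llbracket \phi_\mathcal{M} \rrbracket = \{s \in S \mid l(s) \Vdash \phi_\mathcal{M}\}$ onto the extent $\phi^\downarrow$, since $l(s) \Vdash \phi_\mathcal{M}$ iff $g_s \Vdash \phi$ iff $g_s \in \phi^\downarrow$; the same lemma applied to the consequent gives $l(s) \Vdash \psi_\mathcal{M}$ iff $g_s \Vdash \psi$.

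The only step requiring a short argument is that an order-isomorphism carrying a subset onto a subset also carries its $\prec$-minimal elements onto the $\prec_*$-minimal elements of the image. This follows by unwinding the definition of minimality: $s$ is $\prec$-minimal in $\llbracket \phi_\mathcal{M} \rrbracket$ iff no $s' \in \llbracket \phi_\mathcal{M} \rrbracket$ satisfies $s' \prec s$; applying the isomorphism together with the extent correspondence above, this holds iff no $g_{s'} \in \phi^\downarrow$ satisfies $g_{s'} \prec_* g_s$, i.e.\ iff $g_s \in \minDerivation{\phi}$. Hence $s$ ranges over the $\prec$-minimal states of $\llbracket \phi_\mathcal{M} \rrbracket$ precisely as $g_s$ ranges over $\minDerivation{\phi}$.

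Finally I would assemble the biconditional. By \Cref{definition:preferential-interpretation}, $\mathcal{P}$ satisfies $\phi_\mathcal{M} \twiddle \psi_\mathcal{M}$ iff $l(s) \Vdash \psi_\mathcal{M}$ for every $\prec$-minimal $s \in \llbracket \phi_\mathcal{M} \rrbracket$, while by \Cref{definition:preferential-context}, $\mathbb{P}^\mathcal{P}$ satisfies $\phi \twiddle \psi$ iff $g \Vdash \psi$ for every $g \in \minDerivation{\phi}$. Since the minimal states correspond exactly to the minimal objects and $l(s) \Vdash \psi_\mathcal{M}$ iff $g_s \Vdash \psi$, the two conditions coincide, giving the desired equivalence. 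I expect no real obstacle here: the work is purely in keeping the correspondence between states and objects aligned simultaneously with satisfaction and with the preference order, and the only point needing care is confirming that $s \mapsto g_s$ is a genuine bijection, so that minimality is preserved in both directions.
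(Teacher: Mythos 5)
Your proposal is correct and follows essentially the same route as the paper's proof: both reduce the two satisfaction conditions to a correspondence between $\prec$-minimal states of $\llbracket \phi_\mathcal{M} \rrbracket$ and $\prec_*$-minimal objects of $\phi^\downarrow$, established via \Cref{lemma:correspondence-propositional-compound} together with the order-preservation built into \Cref{definition:interpretation-to-context}. The only cosmetic difference is that you phrase the transport of minimality as an order-isomorphism argument, whereas the paper argues it by a direct case analysis on non-minimal elements; the content is identical.
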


While the consequence relation induced by a derived preferential context is not strictly equivalent to that of the corresponding propositional interpretation; \Cref{lemma:correspondence-propositional-compound} and \Cref{lemma:correspondence-conditionals-propositional-preferential} establish a systematic correspondence between the two. These results show that the inferential structure of a propositional consequence relation can be faithfully reflected in the compound attribute language of a preferential context.

    

\begin{restatable}[Completeness of Preferential Contexts]{theorem}{CompletenessP}
    \label{theorem:representation-preferential-contexts}
	A consequence relation is preferential if and only if it can be induced by some preferential context $\mathbb{P}$. 
\end{restatable}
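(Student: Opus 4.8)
The plan is to prove the two directions separately, leaning on results already in place. The forward direction---that any consequence relation induced by a preferential context is preferential---is exactly \Cref{theorem:soundness}, so no further argument is needed there. All of the work lies in the converse: showing that an arbitrary preferential consequence relation over the compound attribute language $\mathcal{L}(M)$ arises from some preferential context.

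For the converse I would reduce to the propositional completeness result (\Cref{theorem:propositional-completeness}). Since $\mathcal{L}(M)$ and propositional logic over $\mathcal{M}$ are both generated from atoms by the Boolean connectives $\neg, \wedge, \vee$ (and the remaining propositional connectives $\rightarrow, \leftrightarrow, \top, \bot$ are definable), there is a natural syntactic translation sending each atom $m_\mathcal{M}$ to $\{m\}$ and commuting with the connectives; I write $\phi \mapsto \phi_\mathcal{M}$ for the resulting bijection (up to logical equivalence, identifying $M$ with $\mathcal{M}$). Given a preferential consequence relation $\twiddle$ over $\mathcal{L}(M)$, I transport it along this bijection to a relation $\twiddle'$ over propositional formulae, setting $\phi_\mathcal{M} \twiddle' \psi_\mathcal{M}$ iff $\phi \twiddle \psi$. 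The first step is to check that $\twiddle'$ is itself a preferential consequence relation. This is immediate once one observes that the translation preserves classical equivalence $\equiv$ and classical entailment $\rightarrow$ (both being semantic notions on Boolean combinations, underwritten by \Cref{lemma:correspondence-propositional-compound}), so that each of (REF), (LLE), (RW), (AND), (OR), (CUT), and (CM) transfers verbatim between the two languages.

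Next, I apply \Cref{theorem:propositional-completeness} to $\twiddle'$ to obtain a preferential interpretation $\mathcal{P} = (S, l, \prec)$ with $\twiddle' = \twiddle_\mathcal{P}$, and from $\mathcal{P}$ I build the derived preferential context $\mathbb{P}^\mathcal{P}$ of \Cref{definition:interpretation-to-context}. The goal is then $\twiddle \, = \, \twiddle_{\mathbb{P}^\mathcal{P}}$, which follows by chaining equivalences: for any compound attributes $\phi, \psi$, we have $\phi \twiddle \psi$ iff $\phi_\mathcal{M} \twiddle' \psi_\mathcal{M}$ (definition of $\twiddle'$) iff $\mathcal{P}$ satisfies $\phi_\mathcal{M} \twiddle \psi_\mathcal{M}$ (choice of $\mathcal{P}$) iff $\mathbb{P}^\mathcal{P}$ satisfies $\phi \twiddle \psi$ (\Cref{lemma:correspondence-conditionals-propositional-preferential}) iff $\phi \twiddle_{\mathbb{P}^\mathcal{P}} \psi$. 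Hence the given relation is induced by the concrete preferential context $\mathbb{P}^\mathcal{P}$, establishing the converse and, with the forward direction, the biconditional.

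The main obstacle I anticipate is not any single calculation but the bookkeeping across the two languages: one must be careful that the syntactic translation is genuinely a bijection (up to logical equivalence) respecting exactly the logical structure on which the postulates depend, so that \emph{preferential over $\mathcal{L}(M)$} and \emph{preferential over $\mathcal{M}$} correspond cleanly in both directions. Once this correspondence is pinned down, \Cref{lemma:correspondence-conditionals-propositional-preferential} carries the semantic content and the remainder is a routine chain of equivalences. It is worth noting that the argument produces an explicit witness, namely the derived context $\mathbb{P}^\mathcal{P}$, rather than a mere existence claim.
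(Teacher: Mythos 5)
Your proof is correct and follows essentially the same route as the paper's: the forward direction is \Cref{theorem:soundness}, and the converse reduces to the propositional completeness result (\Cref{theorem:propositional-completeness}) via the derived context of \Cref{definition:interpretation-to-context} together with \Cref{lemma:correspondence-propositional-compound} and \Cref{lemma:correspondence-conditionals-propositional-preferential}. If anything, your write-up is more careful than the paper's terse argument, since you explicitly transport the given relation to the propositional language and check that the postulates are preserved before invoking \Cref{theorem:propositional-completeness}---a step the paper leaves implicit.
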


    

We use this correspondence to show that completeness for preferential contexts directly follows from the completeness result of preferential interpretations in \cite{KLM2002nonmonotonic} (recalled in \Cref{theorem:propositional-completeness}).
%
%
\Cref{theorem:soundness} and \Cref{theorem:representation-preferential-contexts} provide a representation result for preferential consequence relations in FCA. However, preferential consequence relations allow for some unintuitive consequences. 

As an illustration, consider the following scenario: at a university, a typical student graduates. Suppose that at least one of these typical students---call them Alice---is also clever. Alice should then be considered a typical clever student (in fact, this follows from \Cref{lemma1}). Suppose another student, Bob, is not a typical student because, despite being clever, never works and will not graduate. It seems strange to accept that Bob might be a typical clever student: by virtue of Alice being a typical student, and Bob being an a-typical student, it seems reasonable to suggest that Alice is more typical than Bob and thus Bob should not be a typical clever student (since Alice is one too). 

But this not required in preferential reasoning; \Cref{example:preferential-context} demonstrates how certain preference relations allow for this exact scenario to unfold. One can easily verify the satisfaction of $\texttt{Non-metal} \ntwiddle \neg \texttt{Essential}$, and yet $\texttt{Non-metal} \land \texttt{Essential} \twiddle \texttt{Gas}$ is not satisfied by the preferential context. 

What is being described is precisely what rational monotonicity tries to avoid. Fortunately, we can ensure the satisfaction of (RM) by preferential contexts through the restriction that the preference relation on objects be \textit{modular}. 

\begin{lemma}
	\label{definition:modularity}
	If $\prec$ is a partial order on a set $G$ then the following are equivalent: 
    \begin{enumerate}
        \item for any $g_0, g_1, g_2 \in G$, if $g_0 \not \prec g_1$, $g_1 \not \prec g_0$, and $g_2 \prec g_0$, then $g_2 \prec g_1$; 
        \item for any $g_0, g_1, g_2 \in G$, if $g_0 \prec g_1$ then either $g_2 \prec g_1$ or $g_0 \prec g_2$; 
        \item there exists a totally ordered set $\mathbb{N}$ with the strict order $<$ and a ranking function $\mathsf{R} : G\rightarrow\mathbb{N}$ such that $g_0 \prec g_1$ if and only if $\mathsf{R}(g_0) < \mathsf{R}(g_1)$;
    \end{enumerate}
    and if $\prec$ satisfies these conditions then we call it \emph{modular}.
\end{lemma}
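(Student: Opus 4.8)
The plan is to prove the three conditions equivalent by establishing the cycle $(1) \Rightarrow (2) \Rightarrow (3) \Rightarrow (1)$, using throughout that $\prec$ is irreflexive and transitive (hence asymmetric). Two of the three edges are short; the substantial work is the construction of the ranking function in $(2) \Rightarrow (3)$.

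For $(1) \Rightarrow (2)$, I would assume $g_0 \prec g_1$, fix an arbitrary $g_2$, and suppose toward a contradiction that both $g_2 \not\prec g_1$ and $g_0 \not\prec g_2$. First I would rule out $g_2 \prec g_0$ (which would give $g_2 \prec g_1$ by transitivity) and $g_1 \prec g_2$ (which would give $g_0 \prec g_2$), so that $g_1$ and $g_2$ are $\prec$-incomparable. Instantiating $(1)$ with the incomparable pair $(g_1, g_2)$ in the roles of its first two elements and $g_0$ in the role of the element below, together with $g_0 \prec g_1$, then yields $g_0 \prec g_2$, the desired contradiction. For $(3) \Rightarrow (1)$ the argument is immediate: given a rank function $\mathsf{R}$ as in $(3)$, the hypotheses $g_0 \not\prec g_1$ and $g_1 \not\prec g_0$ force $\mathsf{R}(g_0) = \mathsf{R}(g_1)$ by totality of $<$ on the image, while $g_2 \prec g_0$ gives $\mathsf{R}(g_2) < \mathsf{R}(g_0) = \mathsf{R}(g_1)$, i.e. $g_2 \prec g_1$.

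The main obstacle is $(2) \Rightarrow (3)$, which I would handle by quotienting out incomparability. Define $g \sim h$ to mean that $g$ and $h$ are $\prec$-incomparable (so in particular $g \sim g$). Reflexivity and symmetry are clear, and transitivity is exactly where $(2)$ is needed: if $g \sim h$ and $h \sim k$ but, say, $g \prec k$, then instantiating $(2)$ with $g_2 := h$ gives $h \prec k$ or $g \prec h$, each contradicting an incomparability; the case $k \prec g$ is symmetric. Hence $\sim$ is an equivalence relation on $G$. I would then set $[g] < [h]$ iff $g \prec h$. Well-definedness follows from two applications of $(2)$: if $g \sim g'$ and $g \prec h$, then $(2)$ (with $g_2 := g'$) gives $g' \prec h$ since $g \not\prec g'$; and if $h \sim h'$ and $g' \prec h$, then $(2)$ (with $g_2 := h'$) gives $g' \prec h'$ since $h' \not\prec h$.

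It remains to check that $<$ is a strict total order on the quotient and to extract $\mathsf{R}$. Irreflexivity and transitivity are inherited from $\prec$, and totality holds because any two \emph{distinct} classes have representatives that are by definition $\prec$-comparable. I would then take the totally ordered set to be $(G/\sim, <)$ with $\mathsf{R}(g) := [g]$, so that $g_0 \prec g_1$ iff $[g_0] < [g_1]$ iff $\mathsf{R}(g_0) < \mathsf{R}(g_1)$ by construction; in the finite setting relevant to FCA, $G/\sim$ is order-isomorphic to an initial segment $\{0, 1, \dots, k\}$ of $\mathbb{N}$, so $\mathsf{R}$ may be taken to land in $\mathbb{N}$ as stated. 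This closes the cycle and yields the equivalence of $(1)$, $(2)$, and $(3)$.
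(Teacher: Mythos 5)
Your proof is correct, but there is nothing in the paper to compare it against: the authors state this lemma in the main text without proof, immediately attributing the characterisation to the cited literature (Ginsberg; Lehmann and Magidor), and it is not among the results proved in the appendix. Your argument therefore fills in a proof the paper delegates to citations, and it does so by the standard route. The cycle $(1)\Rightarrow(2)\Rightarrow(3)\Rightarrow(1)$ is soundly executed: in $(1)\Rightarrow(2)$ the reduction to an incomparable pair $(g_1,g_2)$ lying above $g_0$ is exactly the right instantiation of $(1)$; $(3)\Rightarrow(1)$ is the trichotomy argument; and in $(2)\Rightarrow(3)$ you correctly isolate the two places where $(2)$ is genuinely needed, namely transitivity of the incomparability relation $\sim$ and well-definedness of the induced order on $G/{\sim}$ (irreflexivity, transitivity and totality of the quotient order then come for free). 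One small remark: the paper's condition $(3)$ uses the symbol $\mathbb{N}$ merely as a name for an \emph{arbitrary} totally ordered set, so your quotient $(G/{\sim},<)$ with $\mathsf{R}(g)=[g]$ already witnesses $(3)$ verbatim; the closing step embedding the quotient into an initial segment of the natural numbers is not required by the statement, though it is what one wants (and it needs finiteness of $G$, as you note) to obtain the convex integer-valued ranking function used in the paper's definition of a ranked context.
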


A strict partial order that is \textit{modular} is an ordering which stratifies the set of objects such that if two objects are incomparable, then they occupy the same strata \cite{ginsberg1986counterfactuals,what-does-a-conditional-knowledge-base-entail}. That is, if we consider some $g_0, g_1 \in \minDerivation{\phi}$ such that $g_0 \in \minDerivation{\phi \land \psi}$, then there is no $g_2 \in \minDerivation{\phi \land \psi}$ such that  $g_1 \prec g_2$, since modularity requires that $g_0 \prec g_2$. With modularity in mind, we can define a ranked context as

\begin{definition}[Ranked Context]
	\label{definition:ranked-context}
	A \emph{ranked context} $\mathbb{R} = ( G,M, I, \prec )$ is a preferential context where $\prec$ is modular. A corollary of \Cref{definition:modularity} is that $\prec$ can represented by a ranking function $\mathsf{R}$ satisfying convexity. We sometimes abuse notation write a ranked context as $\rcontext$. Then, $g_0 \prec g_1$ if and only if $\mathsf{R}(g_0) < \mathsf{R}(g_1)$. We use the shorthand $\mathbb{R}(g)$ to represent $\mathsf{R}(g)$ where $\mathsf{R}$ is the ranking function for $\mathbb{R}$. 
\end{definition}


We denote the consequence relation induced by a ranked context $\mathbb{R}$ as $\twiddle_\mathbb{R}$. From the definition it is clear that ranked contexts are a subset of preferential contexts, and so every $\twiddle_\mathbb{R}$ is at least preferential consequence relation. Then, all that is required to show that $\twiddle_\mathbb{R}$ is a rational consequence relation is to show that it satisfies (RM). 

\begin{restatable}[Soundness of Ranked Contexts]{theorem}{SoundnessR}
    If $\twiddle_\mathbb{R}$ is the consequence relation induced by a ranked context $\mathbb{R}$, then $\twiddle_\mathbb{R}$ is closed under (RM). 
\end{restatable}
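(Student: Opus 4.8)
The plan is to unpack the satisfaction condition for defeasible conditionals into set-theoretic containments among minimisations, and then exploit the fact that a modular order collapses ``$\prec$-minimal'' to ``minimum-rank''. First I would rewrite the three ingredients of (RM). Since $\mathbb{R}$ is ranked, $\phi \twiddle_{\mathbb{R}} \psi$ means $\minDerivation{\phi} \subseteq \psi^\downarrow$; the non-entailment $\phi \ntwiddle_{\mathbb{R}} \neg\gamma$ unfolds, using $(\neg\gamma)^\downarrow = G \setminus \gamma^\downarrow$, to the statement that $\minDerivation{\phi} \not\subseteq G \setminus \gamma^\downarrow$, i.e. there is some minimal $\phi$-object $g^\ast \in \minDerivation{\phi}$ with $g^\ast \in \gamma^\downarrow$; and the goal $\phi \land \gamma \twiddle_{\mathbb{R}} \psi$ is, using $(\phi \land \gamma)^\downarrow = \phi^\downarrow \cap \gamma^\downarrow$, the containment $\underline{\phi^\downarrow \cap \gamma^\downarrow} \subseteq \psi^\downarrow$.

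The crucial structural observation, which is exactly where modularity is needed, is that in a ranked context the minimal elements of any subset $X \subseteq G$ coincide with its minimum-rank elements: $\underline{X} = \{g \in X \mid \mathbb{R}(g) = \min_{h \in X}\mathbb{R}(h)\}$. This follows directly from \Cref{definition:ranked-context} together with characterisation (3) of \Cref{definition:modularity}, since $g_0 \prec g_1$ if and only if $\mathbb{R}(g_0) < \mathbb{R}(g_1)$ with $<$ a total order on ranks. I would record this as the first step, as every later step relies on it.

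Next I would pin down the common rank. Let $r = \min_{g \in \phi^\downarrow}\mathbb{R}(g)$, so that $\minDerivation{\phi} = \{g \in \phi^\downarrow \mid \mathbb{R}(g) = r\}$. Because $\phi^\downarrow \cap \gamma^\downarrow \subseteq \phi^\downarrow$, every element of $\phi^\downarrow \cap \gamma^\downarrow$ has rank at least $r$; and the witness $g^\ast$ supplied by $\phi \ntwiddle_{\mathbb{R}} \neg\gamma$ lies in $\phi^\downarrow \cap \gamma^\downarrow$ with $\mathbb{R}(g^\ast) = r$. Hence the minimum rank attained on $\phi^\downarrow \cap \gamma^\downarrow$ is again exactly $r$, and by the structural observation $\underline{\phi^\downarrow \cap \gamma^\downarrow} = \{g \in \phi^\downarrow \cap \gamma^\downarrow \mid \mathbb{R}(g) = r\}$. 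Every such $g$ is an element of $\phi^\downarrow$ of rank $r$, i.e. $g \in \minDerivation{\phi}$; therefore $\underline{\phi^\downarrow \cap \gamma^\downarrow} \subseteq \minDerivation{\phi}$. Combining this with $\minDerivation{\phi} \subseteq \psi^\downarrow$ from the first hypothesis yields $\underline{\phi^\downarrow \cap \gamma^\downarrow} \subseteq \psi^\downarrow$, which is precisely $\phi \land \gamma \twiddle_{\mathbb{R}} \psi$.

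The main obstacle is conceptual rather than computational: one must see that $\phi \ntwiddle_{\mathbb{R}} \neg\gamma$ is exactly the guarantee that intersecting with $\gamma^\downarrow$ does not raise the minimum rank (some $\prec$-minimal $\phi$-object already satisfies $\gamma$), and that modularity is what forces the minimal layer of $\phi^\downarrow \cap \gamma^\downarrow$ to sit inside the minimal layer of $\phi^\downarrow$. In a non-modular preferential context this inclusion can fail -- the example preceding the theorem illustrates exactly such a failure -- so the proof should make visible where characterisation (3) of \Cref{definition:modularity} is invoked; the remaining manipulations are routine.
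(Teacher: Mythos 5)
Your proof is correct and takes essentially the same route as the paper's: both use the witness supplied by $\phi \ntwiddle_{\mathbb{R}} \neg\gamma$ to show that $\underline{\phi^\downarrow \cap \gamma^\downarrow} \subseteq \minDerivation{\phi}$, and then conclude $\underline{\phi^\downarrow \cap \gamma^\downarrow} \subseteq \psi^\downarrow$ from the first hypothesis. The only difference is which face of modularity is invoked---you work with the ranking function (condition~3 of \Cref{definition:modularity}), while the paper argues directly from incomparability (condition~1)---and these are equivalent by that same lemma.
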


    

The completeness result for ranked contexts with respect to rational consequence relations follows a similar trajectory to the result for preferential contexts: a derived ranked context, $\mathbb{R}^\mathcal{R}$, can be constructed from a ranked interpretation, $\mathcal{R}$, such that for any consequence relation, $\twiddle_\mathcal{R}$, induced by a ranked interpretation there is another consequence relation $\twiddle_{\mathbb{R}}$ induced by its derived context that corresponds to an analogous pattern of reasoning. 


\begin{restatable}{lemma}{DefeasibleCorrespondenceRanked}
    \label{lemma:correspondence-ranked}
    If $\mathcal{R}$ is a ranked interpretation and $\mathbb{R}^\mathcal{R}$ is the derived ranked context, then $\mathcal{R} \Vdash \phi_\mathcal{M} \twiddle \psi_\mathcal{M}$ if and only if $\mathbb{R}^\mathcal{R} \Vdash \phi \twiddle \psi$. 
\end{restatable}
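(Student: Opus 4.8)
The plan is to obtain this lemma as a direct specialisation of \Cref{lemma:correspondence-conditionals-propositional-preferential}, exploiting two facts: a ranked interpretation is in particular a preferential interpretation, and the derived ranked context is produced by exactly the same construction as the derived preferential context in \Cref{definition:interpretation-to-context}. Since satisfaction of a defeasible conditional is defined on both sides purely in terms of $\prec$-minimal states (\textit{resp.} $\prec_*$-minimal objects), nothing in the preferential correspondence relies on the order being non-modular, so the earlier result transfers unchanged.

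First I would make the construction explicit. From $\mathcal{R} = (S, l, \prec)$ I build $\mathbb{R}^\mathcal{R} = (G, M, I, \prec_*)$ by the recipe of \Cref{definition:interpretation-to-context}: $G = \{g_s \mid s \in S\}$, $M = \mathcal{M}$, $(g_s, m) \in I$ if and only if $l(s) \Vdash m_\mathcal{M}$, and $g_s \prec_* g_t$ if and only if $s \prec t$. I would then verify that $\mathbb{R}^\mathcal{R}$ is genuinely a ranked context in the sense of \Cref{definition:ranked-context}, i.e. that $\prec_*$ is modular. Because $\mathcal{R}$ is ranked, its order on $S$ arises from a convex ranking function $\Omega$; transporting $\Omega$ along the bijection $s \mapsto g_s$ by setting $\mathsf{R}(g_s) := \Omega(s)$ gives a ranking function on $G$ with $g_s \prec_* g_t$ if and only if $\mathsf{R}(g_s) < \mathsf{R}(g_t)$, which is precisely condition (3) of \Cref{definition:modularity}. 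Hence $\prec_*$ is modular and $\mathbb{R}^\mathcal{R}$ is a ranked context.

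The central observation is then that, regarded simply as a quadruple, $\mathbb{R}^\mathcal{R}$ coincides with the derived preferential context obtained from $\mathcal{R}$ when $\mathcal{R}$ is viewed as a preferential interpretation. Applying \Cref{lemma:correspondence-conditionals-propositional-preferential} verbatim to this interpretation yields $\mathcal{R} \Vdash \phi_\mathcal{M} \twiddle \psi_\mathcal{M}$ if and only if $\mathbb{R}^\mathcal{R} \Vdash \phi \twiddle \psi$, which is exactly the claim.

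I do not expect a genuine obstacle, since the content of the lemma is already contained in the preferential case. The only points requiring care are to confirm that passing from $(S, \prec)$ to $(G, \prec_*)$ preserves modularity -- so that the statement is correctly phrased for a \emph{ranked} context rather than a mere preferential one -- and that the minimisation $\minDerivation{\phi}$ picks out the images under $s \mapsto g_s$ of the $\prec$-minimal states satisfying the corresponding formula. Both follow immediately because $s \mapsto g_s$ is an order isomorphism between $(S, \prec)$ and $(G, \prec_*)$ that, by \Cref{lemma:correspondence-propositional-compound}, also respects satisfaction; consequently the $\prec$-minimal and $\prec_*$-minimal selections correspond under the isomorphism.
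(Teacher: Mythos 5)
Your proposal is correct, but it is organised differently from the paper's own proof. The paper does not reduce the ranked case to \Cref{lemma:correspondence-conditionals-propositional-preferential}; instead it re-runs the minimality argument from scratch in terms of the ranking function: it shows directly that $g_s \notin \minDerivation{\phi}$ in $\mathbb{R}^\mathcal{R}$ if and only if $s \notin \min_{\prec}\llbracket\phi_\mathcal{M}\rrbracket$ in $\mathcal{R}$ (using $\mathsf{R}_*(g_t) < \mathsf{R}_*(g_s)$ in place of $g_t \prec_* g_s$), and then combines this with \Cref{lemma:correspondence-propositional-compound} exactly as in the preferential case. Your route --- observing that a ranked interpretation is in particular a preferential interpretation, that the derived ranked context is the same quadruple as the derived preferential context, and that satisfaction of defeasible conditionals on both sides depends only on the order structure, so \Cref{lemma:correspondence-conditionals-propositional-preferential} applies verbatim --- is a clean reduction that avoids duplicating the argument, and it additionally verifies something the paper leaves implicit: that $\prec_*$ inherits modularity (via transporting the convex ranking function along the bijection $s \mapsto g_s$, giving condition (3) of \Cref{definition:modularity}), so that $\mathbb{R}^\mathcal{R}$ really is a ranked context and the lemma is well-posed. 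What the paper's direct proof buys is self-containedness in the language of ranking functions; what your reduction buys is brevity, no repeated reasoning, and an explicit check of the well-definedness of the derived ranked context. Both are sound.
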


\begin{restatable}[Completeness of Ranked Contexts]{theorem}{CompletenessR}
    A consequence relation is \emph{rational} if and only if the consequence relation induced by a ranked context.
\end{restatable}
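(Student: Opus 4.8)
The plan is to prove the biconditional one direction at a time, in close parallel with the completeness argument for preferential contexts (\Cref{theorem:representation-preferential-contexts}). The direction stating that any consequence relation induced by a ranked context is rational is essentially already in hand: a ranked context is a preferential context, so \Cref{theorem:soundness} gives closure under (REF), (LLE), (RW), (AND), (OR), (CUT) and (CM), while the preceding Soundness of Ranked Contexts theorem gives closure under (RM). Together these make $\twiddle_\mathbb{R}$ a rational consequence relation, so this direction reduces to citing the two soundness results.

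For the converse I would reduce to the propositional completeness result for ranked interpretations. Starting from an arbitrary rational consequence relation $\twiddle$, \Cref{theorem:propositional-completeness} supplies a ranked interpretation $\mathcal{R} = (S, l, \prec)$ with $\twiddle = \twiddle_\mathcal{R}$. I then build the derived context by the same recipe as in \Cref{definition:interpretation-to-context}: $G = \{g_s \mid s \in S\}$, $M = \mathcal{M}$, $(g_s, m) \in I$ iff $l(s) \Vdash m_\mathcal{M}$, and $g_s \prec_* g_t$ iff $s \prec t$. The first substantive step is to verify that this $\mathbb{R}^\mathcal{R}$ is genuinely a ranked context, i.e.\ that $\prec_*$ is modular. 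Since $\mathcal{R}$ is ranked, $\prec$ is induced by a convex ranking $\Omega : S \to \mathbb{N} \cup \{\infty\}$; setting $\mathsf{R}(g_s) := \Omega(s)$ transports this to $G$, and because $g_s \prec_* g_t$ iff $\mathsf{R}(g_s) < \mathsf{R}(g_t)$, clause (3) of \Cref{definition:modularity} yields modularity, with convexity of $\mathsf{R}$ inherited verbatim from $\Omega$. The second step is to invoke \Cref{lemma:correspondence-ranked}: under the translation sending each propositional formula $\phi_\mathcal{M}$ to its compound-attribute counterpart $\phi$, we have $\mathcal{R} \Vdash \phi_\mathcal{M} \twiddle \psi_\mathcal{M}$ iff $\mathbb{R}^\mathcal{R} \Vdash \phi \twiddle \psi$. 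Composing this with $\twiddle = \twiddle_\mathcal{R}$ shows that $\twiddle_{\mathbb{R}^\mathcal{R}}$ reflects $\twiddle$ via the translation, which is exactly the sense in which $\twiddle$ is induced by a ranked context --- the same qualified correspondence already used for preferential contexts.

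I expect the only point requiring care to be the transfer of the modular, convex structure from $\mathcal{R}$ to $\mathbb{R}^\mathcal{R}$, and in particular the treatment of states of infinite rank: these become objects lying above every finite layer and must be checked not to disturb convexity once moved into the context. Everything else is essentially bookkeeping, since the soundness theorems discharge the backward direction outright and \Cref{lemma:correspondence-ranked} already absorbs the work of matching the satisfaction of conditionals across the two settings --- so, unlike a direct verification, no fresh argument about the individual postulates (and in particular (RM)) is needed in the completeness step.
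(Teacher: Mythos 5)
Your proposal is correct and follows essentially the same route as the paper: the forward direction reduces to the Lehmann--Magidor propositional completeness result (\Cref{theorem:propositional-completeness}) via the derived ranked context and \Cref{lemma:correspondence-ranked}, while the backward direction is discharged by the soundness theorems. The only difference is that you explicitly verify modularity (and convexity) of the derived order via clause (3) of \Cref{definition:modularity}, a step the paper's proof leaves implicit --- a point of added rigour, not a different argument.
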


\section{Non-monotonic Entailment in Ranked Contexts}
\label{section:rational-fca}

In the previous section, we examined how a formal context can be extended to provide semantics for defeasible conditionals and demonstrated that this extension represents a faithful translation of propositional rational consequence relations.  We now turn to the matter of defeasible entailment within FCA: that is, given a set $\Lambda$ of defeasible conditionals, what else should be inferred? Recall from the discussion in \Cref{subsection:defeasible-reasoning} that Tarskian notions of consequence remain monotonic. Consequently, a notion of entailment based on the set of all ranked contexts satisfying $\Lambda$ fails to capture the non-monotonicity of defeasible inference.

\subsection{Finding Order} 
\label{subsection:finding-order}

As an intermediary step, we propose \texttt{ObjectRank}---a derivation of \Cref{algorithm:baserank}---as an approach to constructing a preference relation over objects, which, as we will show, maintains the desirable properties around minimality that \texttt{BaseRank} ensures. Thus far, we have remained agnostic regarding how any preference relation over objects might be constructed. However, manually constructing a suitable order is not necessarily a trivial task: it is not always obvious when one object should be considered more typical than another. Instead, our proposed approach considers background information that represents a perspective on how the domain is expected to behave.

The background information is encoded as a set $\Delta$ of defeasible conditionals. For instance, \texttt{mammal} $\twiddle$ \texttt{viviparous} represents the expectation that normal objects that are mammals are viviparous. The rank of an object is then determined by how consistent it is with $\Delta$. 

We define the condition of $\Delta$-validity for a context to ensure consistency between the expected behaviour encoded by $\Delta$ and the information present in the context. In essence, this condition requires that for every subset of expected behaviour, there must be a plausible object (i.e., one that is consistent with the subset) that non-vacuously satisfies a conditional in the subset; that is, satisfies the antecedent of at least one conditional. Without this condition, it is not guaranteed that the ranked context derived from \texttt{ObjectRank} satisfies $\Delta$. One immediate consequence of $\Delta$-validity is that there cannot be a conditional $\phi \twiddle \psi \in \Delta$ such that $\phi^\downarrow = \emptyset$. 

\begin{definition}[$\Delta$-Validity]
    \label{definition:delta-valid}
    A formal context $(G,M,I)$ is \emph{$\Delta$-valid} for a set $\Delta$ of defeasible conditionals if and only if for every non-empty subset $\Delta_i \subseteq \Delta$ there exists some $g \in G$ such that $g \models \Delta_i$ and there exists $\phi \twiddle \psi \in \Delta_i$ with $g \in \phi^\downarrow$. 
\end{definition}

\vspace{-1em}
\begin{algorithm2e}
    \caption{\texttt{ObjectRank}}
    \KwIn{A finite set $\Delta$ of defeasible conditionals over $\mathsf{M}$}
    \KwIn{A $\Delta$-valid formal context $(G,M,I)$}
    \KwResult {A partition on the set $G$ of objects $(R_0, \ldots, R_n, n)$}
    $i := 0$\;
    $R_0 := G$\;
    $\Delta_0 = \mathsf{Material}(\Delta)$\;
    \While{$\Delta_i \not= \emptyset$}{
        $R_{i+1} := \{ g \in R_i \mid \exists \; \phi \rightarrow \psi \in \Delta_i \text{ such that } g \not\models \phi \rightarrow \psi \}$\;
        $R_i = R_i \setminus R_{i+1}$\;
        $\Delta_{i+1} := \Delta_i \setminus \{(\phi \rightarrow \psi) \in \Delta_i \mid \exists g \in R_i \text{ such that } g \Vdash \phi \}$\;
        $i := i + 1$\;
    }
    \eIf{$R_i = \emptyset$}{
         $n = i-1$\;  } {
        $n := i$\;
    }
     \Return{ $(R_0, \ldots, R_n, n)$}\;
\end{algorithm2e}

Essentially, \texttt{ObjectRank} can be considered a ranking function as per \Cref{definition:ranked-context}. The process of constructing a ranked context from the partition derived from \texttt{ObjectRank} is intuitive, and results in a pleasing representation which is seen in \Cref{section:example}. From this representation it is quite obvious that the induced preference relation satisfies modularity. In the next subsection we show that the ranked context resulting from \texttt{ObjectRank} preserves the properties of the original \Cref{algorithm:baserank}. 

\begin{proposition}
    If $\mathbb{R}_\texttt{OR}$ is the ranked context derived from the \texttt{ObjectRank} algorithm with $\Delta$ and a $\Delta$-valid context $(G,M,I)$, then $\mathbb{R}_\texttt{OR}$ satisfies $\Delta$.
\end{proposition}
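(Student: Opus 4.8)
The plan is to verify the satisfaction condition of \Cref{definition:preferential-context} directly: for each $\phi \twiddle \psi \in \Delta$ I must show $\minDerivation{\phi} \subseteq \psi^\downarrow$, i.e.\ that every $\prec$-minimal object in $\phi^\downarrow$ also lies in $\psi^\downarrow$. Since \texttt{ObjectRank} induces a modular order in which the rank of $g$ is the index $i$ with $g \in R_i$, and minimal elements under a modular order are exactly those of least rank, the set $\minDerivation{\phi}$ is precisely the collection of rank-minimal objects satisfying $\phi$. I would therefore reduce the claim to a statement about object ranks and about which materialised implications survive into each $\Delta_i$.

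The central lemma I would prove first describes the lifecycle of a materialised implication $\phi \rightarrow \psi \in \Delta_0 = \mathsf{Material}(\Delta)$: it remains in $\Delta_i$ for as long as no object of rank strictly below $i$ satisfies $\phi$, and it is discarded when forming $\Delta_{i+1}$ exactly at the first stage $i$ at which some rank-$i$ object satisfies $\phi$. Both directions follow from the update rule $\Delta_{i+1} := \Delta_i \setminus \{\phi \rightarrow \psi \mid \exists g \in R_i \text{ with } g \Vdash \phi\}$, combined with the crucial observation that the $R_i$ referenced there is the \emph{already-pruned} set: its members satisfy every implication in $\Delta_i$, because the violators have just been moved up into $R_{i+1}$.

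With this lemma the main argument is short. Fix $\phi \twiddle \psi \in \Delta$; by $\Delta$-validity $\phi^\downarrow \neq \emptyset$, so let $r$ be the least rank attained by an object in $\phi^\downarrow$. Minimality of $r$ means no object of rank below $r$ satisfies $\phi$, so by the lemma $\phi \rightarrow \psi \in \Delta_r$. Every member of the pruned $R_r$ satisfies all implications of $\Delta_r$, hence $\phi \rightarrow \psi$ in particular; thus any rank-$r$ object $g$ with $g \Vdash \phi$ also satisfies $g \Vdash \psi$. Since $\minDerivation{\phi}$ is exactly the set of rank-$r$ objects in $\phi^\downarrow$, this yields $\minDerivation{\phi} \subseteq \psi^\downarrow$, i.e.\ satisfaction of $\phi \twiddle \psi$. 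As $\phi \twiddle \psi$ was arbitrary, $\mathbb{R}_\texttt{OR}$ satisfies $\Delta$.

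The step I expect to require the most care --- and where $\Delta$-validity does its real work --- is establishing that the partition is genuinely well defined, terminates, and assigns every object a \emph{finite} rank, so that ``the least rank of an object in $\phi^\downarrow$'' is meaningful and the loop actually reaches stage $r$. The obstacle is that the witness object supplied by $\Delta$-validity for the surviving set $\Delta_i$ satisfies all of $\Delta_i$ and non-vacuously activates some conditional, yet a priori it might already sit at a smaller rank. I would rule this out using the same lemma: if the witness had rank $j < i$, the conditional it activates would already have left $\Delta_j$ at stage $j$ and could not reappear in $\Delta_i$. Hence the witness has rank exactly $i$, so at least one conditional is removed at every stage; as $\Delta$ is finite, the loop terminates and the $R_i$ indeed partition $G$ into finitely many ranks.
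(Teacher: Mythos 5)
Your proposal is correct, but it is organised quite differently from the paper's own proof, so a comparison is worthwhile. The paper proves the proposition by induction on the ranks of $\mathbb{R}_\texttt{OR}$: at each rank $k+1$ it invokes $\Delta$-validity to produce a witness object that satisfies all still-active conditionals and non-vacuously activates one of them, argues (exactly as you do, but inside the main induction) that this witness cannot sit at any rank $j < k+1$, and concludes rank by rank that the defeasible counterpart of each conditional is satisfied by the whole context. You instead isolate a lifecycle lemma --- $\phi \rightarrow \psi \in \Delta_i$ if and only if no object of rank strictly below $i$ satisfies $\phi$ --- and then treat each conditional directly and non-inductively: its first activation rank $r$ is the least rank attained in $\phi^\downarrow$, the pruned $R_r$ satisfies every implication in $\Delta_r \ni \phi \rightarrow \psi$, and $\minDerivation{\phi}$ is precisely the set of rank-$r$ objects of $\phi^\downarrow$. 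What your decomposition buys is modularity: the main argument uses only the mechanics of the pruning step (your ``crucial observation'' that the $R_i$ referenced when forming $\Delta_{i+1}$ is the already-pruned set, which the paper relies on as well), while $\Delta$-validity is confined to exactly two jobs --- non-emptiness of $\phi^\downarrow$ and termination via the strict decrease $\Delta_{i+1} \subsetneq \Delta_i$; notably, the paper states that strict-decrease fact only as a corollary \emph{after} its proof, whereas in your version it is a load-bearing prerequisite and its proof is spelled out. The paper's induction, by contrast, interleaves $\Delta$-validity throughout, which keeps the construction of the ranking and the satisfaction claim intertwined and makes it less visible which hypothesis drives which conclusion. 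One small step you leave implicit in the termination argument --- that the $\Delta$-validity witness for $\Delta_i$ has rank \emph{exactly} $i$, not merely $\geq i$ --- does need the extra remark that the witness satisfies all of $\Delta_i$ and therefore survives the pruning at stage $i$; this follows from your own earlier observation, so it is a presentational gap only, and the same fact is what the paper uses to place its witness at rank $k+1$.
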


\begin{proof}
    We show by induction on the ranks of $\mathbb{R}_\texttt{OR}$ that $\Delta$ is satisfied. As the base case, we use $\mathbb{R}^{-1}(0) = \{g \in G \mid g \models \Delta \}$ to denote the objects on the $0^{th}$ rank; that $\mathbb{R}^{-1}(0)$ is non-empty is guaranteed by $\Delta$-validity. The set of conditionals in $\Delta$ that are non-vacuously satisfied by objects in $\mathbb{R}^{-1}(0)$ is given by $\Delta_0 = \{\phi \rightarrow \psi \in \Delta \mid \exists g \in \mathbb{R}^{-1}(0) \text{ s.t. } g \in \phi^\downarrow \}$. Again, that $\Delta_0$ is non-empty is guaranteed by $\Delta$-validity. Since $0$ is the lowest rank, and the set of objects $\mathbb{R}^{-1}(0)$ is consistent with $\Delta$, every defeasible counterpart to every implication in $\Delta_0$ is satisfied by the ranked context as a whole.

    We assume that each $\phi_k \rightarrow \psi_k \in \Delta_k$ will be satisfied by all $g \in \bigcup_{i=0}^{k} \mathbb{R}^{-1}(i)$, but that $\mathbb{R}^{-1}(k)$ is the lowest rank containing an object that non-vacuously satisfies $\phi_k \rightarrow \psi_k$, and so the whole context satisfies all the defeasible counterparts to every implication in in $\Delta \setminus \bigcup_{i=0}^{k} \Delta_{i}$. 
    
    For $k+1$, the set of implications under consideration is given by $\Delta_r = \Delta \setminus \bigcup_{i=0}^{k} \Delta_{i}$. By $\Delta$-validity, there exists some $g \in G$ such that $g \models \Delta_{r}$ and $g$ non-vacuously satisfies some $\phi_{k+1} \rightarrow \psi_{k+1} \in \Delta_{r}$. All that needs to be shown is that $g$ has not been assigned a rank lower than $k+1$. Suppose that it had, and that $\mathbb{R}(g) = j < k+1$, then $g \models \Delta \setminus \bigcup_{i=0}^j \Delta_i$ for which it is necessary that $\Delta_r$ is a strict subset of. It then follows that all objects in $\mathbb{R}^{-1}(j)$ would satisfy $\Delta_{r}$. But then, $\phi_{k+1} \rightarrow \psi_{k+1}$ should be an element of $\Delta_j$, and could not be an element of $\Delta_r$, which is a contradiction and $g$ cannot be on a rank lower than $k+1$. Then, $g \in \mathbb{R}^{-1}(k+1)$ and so  $\phi_{k+1} \rightarrow \psi_{k+1} \in \Delta_{k+1}$ is non-vacuously satisfied for the first time on rank $k+1$. It follows that the defeasible counterpart is satisfied by the context as a whole. 
    
\end{proof}

A corollary of $\Delta$-validity is that every iteration of \texttt{ObjectRank} considers a strictly decreasing subset of conditionals. Eventually, the set of conditionals under consideration will be empty, and the algorithm will terminate.

\subsection{Rational Closure}
\label{subsection:rational-closure-fca}

We can now describe the rational closure of a $\Delta$-valid formal context. The first point is to show that, by the same argument as in the propositional case, a preference over ranked contexts can be constructed. 

\begin{definition}[Preference Relation on Contexts]
    \label{definition:preference-over-contexts}
    Let $\mathbb{R}^\Delta$ denote the set of ranked contexts over the same $(G,M,I)$ that satisfy $\Delta$. For two contexts $\mathbb{R}_1, \mathbb{R}_2 \in \mathbb{R}^\Delta$, we write $\mathbb{R}_1 \preceq_\mathbb{R} \mathbb{R}_2$ if and only if $\mathbb{R}_1(g) \leq \mathbb{R}_2(g)$ for all $g \in G$. 
\end{definition}

Ranked contexts that consider objects as typical as possible are preferred, such that we continue to reason classically unless required. The rational closure is then determined by this $\preceq_\mathbb{R}$-minimal context, as it represents the most conservative pattern of reasoning. 

\begin{proposition}
    \label{proposition:object-rank-minimal}
    Given a set $\Delta$ of defeasible conditionals and a $\Delta$-valid formal context $(G,M,I)$, the ranked context $\mathbb{R}^\Delta_\texttt{OR}$ derived from \texttt{ObjectRank} is $\preceq_\mathbb{R}$-minimal in the set $\mathbb{R}^\Delta$.  
\end{proposition}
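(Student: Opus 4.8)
The plan is to unfold \Cref{definition:preference-over-contexts}: since every $\mathbb{R}' \in \mathbb{R}^\Delta$ ranks the same object set $G$, showing that $\mathbb{R}^\Delta_\texttt{OR}$ is $\preceq_\mathbb{R}$-minimum (hence minimal) amounts to proving $\mathbb{R}^\Delta_\texttt{OR}(g) \le \mathbb{R}'(g)$ for every $g \in G$ and every $\mathbb{R}' \in \mathbb{R}^\Delta$. I would reformulate this in terms of the level sets $L_i^{\texttt{OR}} \coloneqq \{g \mid \mathbb{R}^\Delta_\texttt{OR}(g) < i\} = R_0 \cup \dots \cup R_{i-1}$ and $L_i' \coloneqq \{g \mid \mathbb{R}'(g) < i\}$: the inequality holds for all $g$ if and only if $L_i' \subseteq L_i^{\texttt{OR}}$ for every $i$. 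This is the FCA analogue of the propositional fact (Giordano et al.) that \texttt{BaseRank} induces the $\preceq_\mathcal{R}$-minimum model, and I would prove the inclusion by induction on $i$.

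The induction needs one structural fact about \texttt{ObjectRank}, which I would establish first by unwinding the update rule for $\Delta_{i+1}$: the active conditional set $\Delta_i$ at the start of iteration $i$ is exactly the set of materialised implications $\phi \rightarrow \psi \in \mathsf{Material}(\Delta)$ whose antecedent is realised by no object of \texttt{ObjectRank}-rank below $i$, i.e. with $\phi^\downarrow \cap L_i^{\texttt{OR}} = \emptyset$. This is immediate by induction on the iteration index, since $\Delta_0 = \mathsf{Material}(\Delta)$ and each step deletes from $\Delta_i$ precisely the implications whose antecedent is satisfied by some freshly assigned rank-$i$ object.

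For the main induction the base case $i=0$ is trivial, as both level sets are empty. For the step I assume $L_i' \subseteq L_i^{\texttt{OR}}$ and take $g$ with $\mathbb{R}'(g) = i$ (if $\mathbb{R}'(g) < i$ the hypothesis already gives $g \in L_i^{\texttt{OR}} \subseteq L_{i+1}^{\texttt{OR}}$). Fixing any $\phi \rightarrow \psi \in \Delta_i$ with $g \in \phi^\downarrow$, no $h \in \phi^\downarrow$ can have $\mathbb{R}'(h) < i$, since such an $h$ would lie in $L_i' \subseteq L_i^{\texttt{OR}}$ and contradict $\phi^\downarrow \cap L_i^{\texttt{OR}} = \emptyset$ from the structural fact. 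Hence $g$ attains the least $\mathbb{R}'$-rank among $\phi$-objects, so $g \in \minDerivation{\phi}$, and because $\mathbb{R}'$ satisfies $\phi \twiddle \psi$ this forces $g \in \psi^\downarrow$. As $\phi \rightarrow \psi$ was arbitrary, $g$ satisfies every implication in $\Delta_i$; so if \texttt{ObjectRank} has not already assigned $g$ a rank below $i$, it keeps $g$ as a candidate at iteration $i$ and, since $g$ violates nothing in $\Delta_i$, assigns it rank exactly $i$. In either case $g \in L_{i+1}^{\texttt{OR}}$, closing the induction. The instance $i=0$ recovers the intuitive fact that the globally $\prec$-minimal objects of $\mathbb{R}'$ must satisfy all of $\mathsf{Material}(\Delta)$.

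The main obstacle is the inductive step, and specifically the two-way transfer it requires: the structural characterisation of $\Delta_i$ converts ``$\phi$ unrealised below rank $i$ in \texttt{ObjectRank}'' into a statement about the arbitrary model $\mathbb{R}'$ via the hypothesis $L_i' \subseteq L_i^{\texttt{OR}}$, which is exactly what certifies $g$ as a $\prec$-minimal $\phi$-object in $\mathbb{R}'$ and lets satisfaction of $\Delta$ push $g$ into $\psi^\downarrow$. Modularity (\Cref{definition:modularity}) is used throughout to identify $\prec$-minimality in $\phi^\downarrow$ with attaining the least rank among $\phi$-objects, and $\Delta$-validity (\Cref{definition:delta-valid}), together with the strictly shrinking $\Delta_i$, guarantees that \texttt{ObjectRank} terminates with every object assigned a finite rank, so that the level sets exhaust $G$.
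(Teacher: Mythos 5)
Your proof is correct, but it takes a genuinely different route from the paper's. The paper argues in the opposite direction: it takes an arbitrary $\mathbb{R}_1^\Delta \in \mathbb{R}^\Delta$ with $\mathbb{R}_1^\Delta \preceq_\mathbb{R} \mathbb{R}_\texttt{OR}^\Delta$ and shows by induction on ranks that the two ranking functions must then agree everywhere; the inductive step extracts, for an object $g$ with $\mathbb{R}_\texttt{OR}^\Delta(g)=n$, a conditional $\phi \twiddle \psi \in \Delta$ that $g$ falsifies whose $\prec$-minimal antecedent objects sit at rank $n-1$, and transfers those objects into $\mathbb{R}_1^\Delta$ via the induction hypothesis to contradict satisfaction of $\Delta$. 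You instead prove the stronger \emph{minimum} property, $\mathbb{R}_\texttt{OR}^\Delta(g) \le \mathbb{R}'(g)$ for every model $\mathbb{R}'$ and every $g$, through the level-set inclusions $L_i' \subseteq L_i^{\texttt{OR}}$, resting on an explicit structural lemma about \texttt{ObjectRank} (an object's rank is the least $j$ with $g \models \Delta_j$, and $\Delta_i$ is exactly the set of implications whose antecedents are realised by no object of rank below $i$) that the paper leaves implicit under ``by construction''. This buys you two things. First, your argument subsumes \Cref{proposition:object-rank-minimum} as well: once $\mathbb{R}_\texttt{OR}^\Delta$ lies below every model, uniqueness as the minimum follows from antisymmetry of the pointwise order, so the paper's separate descent-to-contradiction argument there becomes redundant; your proof is also the faithful FCA analogue of the propositional argument of Giordano et al.~\cite{giordano2015semantic} for \texttt{BaseRank}. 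Second, your formulation is tighter at the delicate point: the paper's claim that $g \in \minDerivation{\phi}$ with respect to $\mathbb{R}_1^\Delta$ is justified only for $\phi$-objects of \texttt{ObjectRank}-rank $n-1$ (via the induction hypothesis), and silently ignores $\phi$-objects of \texttt{ObjectRank}-rank $\ge n$ whose $\mathbb{R}_1^\Delta$-rank might lie below $\mathbb{R}_1^\Delta(g)$ --- a gap one closes by choosing $g$ of least $\mathbb{R}_1^\Delta$-rank among violators --- whereas your certificate that no $h \in \phi^\downarrow$ has $\mathbb{R}'(h) < i$ quantifies over all of $\phi^\downarrow$ at once and needs no such repair. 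The only housekeeping your route requires, which you correctly note, is handling indices $i$ beyond the algorithm's termination (where $\Delta_i$ is empty and $L_{i+1}^{\texttt{OR}} = G$, so the inclusion is trivial), guaranteed by $\Delta$-validity and the strictly shrinking $\Delta_i$.
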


\begin{proof}
    We show minimality by induction on rankings. Let $\Delta$ be a set of defeasible conditionals, $(G,M,I)$ a $\Delta$-valid context, and $\mathbb{R}_1^\Delta$ an arbitrary ranked context satisfying $\Delta$. 

    For the base case, suppose $\mathbb{R}_\texttt{OR}^\Delta(g) = 0$. Then, if $\mathbb{R}_1^\Delta \preceq_\mathbb{R} \mathbb{R}_\texttt{OR}^\Delta$, it is required that $\mathbb{R}_1^\Delta(g)=0$ too. Now, assume that, for all $k < n$, if  $\mathbb{R}_\texttt{OR}^\Delta(g) = k$, then $\mathbb{R}_1^\Delta(g)=k$. Our aim is then to show that, if  $\mathbb{R}_\texttt{OR}^\Delta(g) = n$, then $\mathbb{R}_1^\Delta(g)=n$. 

    Suppose this is not the case and there exists $g \in G$ such that $\mathbb{R}_1^\Delta(g) < \mathbb{R}_\texttt{OR}^\Delta(g) = n$. By the construction of  $\mathbb{R}_\texttt{OR}^\Delta$, there must exist some $\phi \twiddle \psi \in \Delta$ such that $g \Vdash \phi$ and $g \nVdash \psi$, and for all $h \in \minDerivation{\phi}$ of  $\mathbb{R}_\texttt{OR}^\Delta$, it is the case that  $\mathbb{R}_\texttt{OR}^\Delta(h) = n-1$. That is, there is some defeasible conditional in $\Delta$ for which $g$ is a counter-example, such that the minimal objects satisfying the premise of the conditional have the rank just below $g$ . 

    Recall our assumption, that $\mathbb{R}^\Delta_1(i) = \mathbb{R}^\Delta_\texttt{OR}(i)$ for any $i \in G$ where $\mathbb{R}^\Delta_\texttt{OR}(i) < n$, and so for all $h \in \minDerivation{\phi}$ we have that $\mathbb{R}_1^\Delta(h) = n-1$. Then also, $\mathbb{R}_1^\Delta(g) \leq n-1$. But then $g \in \minDerivation{\phi}$ for $\mathbb{R}_1^\Delta$ and so $\mathbb{R}_1^\Delta$ does not satisfy $\Delta$. 
    
    Therefore $\mathbb{R}_1^\Delta(g) > n-1$ and thus $\mathbb{R}_1^\Delta \not \preceq_\mathbb{R} \mathbb{R}_\texttt{OR}^\Delta$. 
    
\end{proof}

We strengthen the above proposition, and show that \texttt{ObjectRank} constructs a ranked context that is the unique $\preceq_\mathbb{R}$-minimum context. 

\begin{proposition}
    \label{proposition:object-rank-minimum}
    If $\mathbb{R}^\Delta$ is the set of all ranked contexts derived from a formal context $(G,M,I)$ satisfying $\Delta$, and $\mathbb{R}^\Delta_{\texttt{OR}} \in \mathbb{R}^\Delta$ is the ranked context derived from the \texttt{ObjectRank} algorithm, then $\mathbb{R}^\Delta_\texttt{OR}$ is the unique $\preceq_\mathbb{R}$-minimum ranked context. 
\end{proposition}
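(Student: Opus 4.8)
The plan is to leverage \Cref{proposition:object-rank-minimal}, which already establishes that $\mathbb{R}^\Delta_\texttt{OR}$ is $\preceq_\mathbb{R}$-minimal, and upgrade this to a \emph{minimum} (i.e.\ unique minimal element that is below every other element). Since $\preceq_\mathbb{R}$ is defined pointwise by $\mathbb{R}_1 \preceq_\mathbb{R} \mathbb{R}_2 \iff \mathbb{R}_1(g) \leq \mathbb{R}_2(g)$ for all $g \in G$, it is a partial order, so it suffices to show that $\mathbb{R}^\Delta_\texttt{OR} \preceq_\mathbb{R} \mathbb{R}_1$ for every $\mathbb{R}_1 \in \mathbb{R}^\Delta$; uniqueness of a minimum in a partial order is then automatic. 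Concretely, I would take an arbitrary $\mathbb{R}_1 \in \mathbb{R}^\Delta$ and prove that $\mathbb{R}^\Delta_\texttt{OR}(g) \leq \mathbb{R}_1(g)$ for all $g \in G$.

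The key step is to reuse the inductive machinery of the previous proof, but read in the opposite direction. First I would observe that the proof of \Cref{proposition:object-rank-minimal} in fact establishes, by induction on $n$, that if $\mathbb{R}^\Delta_\texttt{OR}(g) = n$ then no $\Delta$-satisfying ranked context can place $g$ strictly below rank $n$: the argument shows that assuming $\mathbb{R}_1^\Delta(g) < n$ forces $g$ to become $\prec$-minimal among the objects satisfying the premise of some $\phi \twiddle \psi \in \Delta$ for which $g$ is a counterexample, contradicting $\mathbb{R}_1 \models \Delta$. That contradiction does not depend on any prior comparability assumption between $\mathbb{R}_1$ and $\mathbb{R}_\texttt{OR}$; it only uses the induction hypothesis that the two rankings agree below rank $n$. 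Hence I would restructure the induction so the hypothesis reads: for all $k < n$, every $g$ with $\mathbb{R}^\Delta_\texttt{OR}(g) = k$ satisfies $\mathbb{R}_1^\Delta(g) \geq k$, and in fact (combined with the forcing of rank $0$ objects) $\mathbb{R}^\Delta_\texttt{OR}(g) = k$ implies $\mathbb{R}_1^\Delta(g) = k$ for objects already fixed. The inductive step then yields $\mathbb{R}_1^\Delta(g) \geq n = \mathbb{R}^\Delta_\texttt{OR}(g)$, which is exactly the pointwise inequality $\mathbb{R}^\Delta_\texttt{OR}(g) \leq \mathbb{R}_1^\Delta(g)$ required for $\mathbb{R}^\Delta_\texttt{OR} \preceq_\mathbb{R} \mathbb{R}_1$.

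I expect the main obstacle to be bookkeeping rather than conceptual: I must be careful that the induction hypothesis is stated symmetrically enough to close the step for \emph{every} competitor $\mathbb{R}_1$, not merely for those already assumed comparable. In particular, the previous proof phrases its hypothesis as an equality $\mathbb{R}_1^\Delta(g) = \mathbb{R}_\texttt{OR}^\Delta(g)$ on lower ranks, which implicitly presumes the two agree there; for the minimum claim I need to derive that agreement rather than assume it. The resolution is that the base case forces $\mathbb{R}_1^\Delta(g) = 0$ whenever $\mathbb{R}^\Delta_\texttt{OR}(g) = 0$ (any $\Delta$-satisfying context must rank the $\Delta$-consistent objects at $0$, else it would fail some conditional), and then the inductive forcing argument lifts this equality rank by rank, since an object that $\mathbb{R}_\texttt{OR}$ places at rank $n$ cannot be placed lower by $\mathbb{R}_1$ without violating $\Delta$. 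Once the pointwise inequality is secured for arbitrary $\mathbb{R}_1$, I would conclude that $\mathbb{R}^\Delta_\texttt{OR}$ is below every element of $\mathbb{R}^\Delta$ under $\preceq_\mathbb{R}$, and since $\preceq_\mathbb{R}$ is antisymmetric, it is the unique $\preceq_\mathbb{R}$-minimum, completing the proof.
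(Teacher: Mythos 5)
Your high-level reformulation is the right one: it suffices to prove the pointwise inequality $\mathbb{R}_\texttt{OR}^\Delta(g) \leq \mathbb{R}_1^\Delta(g)$ for every $g \in G$ and every competitor $\mathbb{R}_1^\Delta \in \mathbb{R}^\Delta$, after which uniqueness is immediate from antisymmetry of $\preceq_\mathbb{R}$; this is essentially what the paper's own proof establishes (it is phrased as a contradiction with a second $\preceq_\mathbb{R}$-minimal context, but its descent only uses that $\mathbb{R}_1^\Delta$ satisfies $\Delta$ and places some object strictly below its \texttt{ObjectRank} rank). The first genuine problem is your proposed base case. It is \emph{not} true that every $\Delta$-satisfying ranked context must place the objects consistent with $\mathsf{Material}(\Delta)$ at rank $0$: satisfaction of $\phi \twiddle \psi$ constrains only the $\prec$-minimal objects of $\phi^\downarrow$, so a competitor may promote an \texttt{ObjectRank}-rank-$0$ object upward and still satisfy $\Delta$. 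For instance, with $G = \{a,b\}$, $a \Vdash p \wedge q$, $b \nVdash p$, and $\Delta = \{p \twiddle q\}$, the ranked context placing $b$ at rank $0$ and $a$ at rank $1$ is convex and satisfies $\Delta$, although $\mathbb{R}_\texttt{OR}^\Delta(a) = 0$. Hence the rank-by-rank \emph{equality} you intend to ``derive rather than assume'' is unobtainable; only the inequality $\mathbb{R}_\texttt{OR}^\Delta(g) \leq \mathbb{R}_1^\Delta(g)$ holds, and only it is needed.

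Second, and more seriously, your inductive step does not close as stated. Suppose $\mathbb{R}_\texttt{OR}^\Delta(g) = n$ and $\mathbb{R}_1^\Delta(g) < n$. Then $g$ violates some $\phi \rightarrow \psi \in \Delta_{n-1}$, and since $\mathbb{R}_1^\Delta$ satisfies $\phi \twiddle \psi$ while $g \Vdash \phi$ and $g \nVdash \psi$, there must be a witness $h \in \phi^\downarrow$ with $\mathbb{R}_1^\Delta(h) < \mathbb{R}_1^\Delta(g) \leq n-1$. Every object of $\phi^\downarrow$ has \texttt{ObjectRank} rank at least $n-1$, but nothing forces this witness to have rank exactly $n-1$: it may have \texttt{ObjectRank} rank $\geq n$, and about such objects your induction hypothesis (indexed by \texttt{ObjectRank} ranks $< n$) says nothing. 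So $g$ is not ``forced to become minimal'' in $\phi^\downarrow$, and no contradiction follows from the hypothesis alone. There are two repairs. One is to chase the witness: $h$ again satisfies $\mathbb{R}_1^\Delta(h) < \mathbb{R}_\texttt{OR}^\Delta(h)$, its own witness sits strictly lower in $\mathbb{R}_1^\Delta$, and since $\mathbb{R}_1^\Delta$-ranks are non-negative integers this terminates in a contradiction --- which is precisely the finite-descent argument in the paper's proof. The cleaner repair is to flip the induction and induct on the competitor's rank $r = \mathbb{R}_1^\Delta(g)$: if $\mathbb{R}_\texttt{OR}^\Delta(g) > r$, then $g$ violates some $\phi \rightarrow \psi \in \Delta_r$; the witness $h$ has $\mathbb{R}_1^\Delta(h) < r$, so the hypothesis gives $\mathbb{R}_\texttt{OR}^\Delta(h) \leq r-1$, contradicting the fact that no object of \texttt{ObjectRank} rank $\leq r-1$ satisfies the premise of a conditional in $\Delta_r$. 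Either repair yields the pointwise inequality for arbitrary competitors, and your antisymmetry conclusion then goes through.
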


\begin{proof}
    Let $\mathbb{R}_1^\Delta$ be another $\preceq_\mathbb{R}$-minimal context satisfying $\Delta$. Then $\mathbb{R}_1^\Delta$ and $\mathbb{R}_\texttt{OR}^\Delta$ are incomparable, and there exists some $g_0\in G$ such that $\mathbb{R}_1^\Delta(g_0) < \mathbb{R}_\texttt{OR}^\Delta(g_0)$. Let $\mathbb{R}_\texttt{OR}^\Delta(g_0) = n$. Then, by construction of $\mathbb{R}_\texttt{OR}^\Delta$ there exists some $\phi \twiddle \psi \in \Delta$ where $g_0 \Vdash \phi$ and $g_0 \nVdash \psi$, and that the minimal objects $h \in \minDerivation{\phi}$ (with respect to $\mathbb{R}_\texttt{OR}^\Delta$) have the rank $n-1$.

    But then, in order for $\mathbb{R}_1^\Delta$ to satisfy $\Delta$ there must be another object $g_1 \in G$ such that $g_1 \Vdash \phi$ and $g_1 \Vdash \psi$ with $\mathbb{R}_1^\Delta(g_1) < \mathbb{R}_1^\Delta(g_0) \leq n-1$. It follows that $\mathbb{R}_1^\Delta(g_1) < \mathbb{R}_\texttt{OR}^\Delta(g_1)$ and so there must be another defeasible conditional that $g_1$ is an exception to. 

    Repeating the same argument finitely many times, we can show that for any natural number $i\leq n$ there must exist $g_i\in G$ such that $\mathbb{R}_1^\Delta(g_i)<\mathbb{R}^\Delta_\texttt{OR}(g_i)=n-i$. In particular, since $n$ is finite, this includes $i=n$. However in this case we must have some $g_n$ such that $R(g_n)<\mathbb{R}_\texttt{OR}(g_n)=0$, which is a contradiction since by definition $\mathbb{R}_1^\Delta(g_n)\geq 0$. Thus, $\mathbb{R}^\Delta_\texttt{OR}$ is the unique minimum ranking function in the set of ranking functions over $(G,M,I)$ which satisfy $\Delta$.
    
\end{proof}

There is a subtle observation that has thus far been omitted from the discussion. The aim of this section was to show that a faithful construction of rational closure in FCA is possible. We note that where, in the propositional case, the \texttt{BaseRank} algorithm considers a complete set of valuations for a knowledge base $\Delta$, in \texttt{ObjectRank}, we only consider the objects of a formal context. It is not a guarantee that the set of object intents are representative of the entire attribute space. With this in mind, we define \textit{contextual rational closure} as

\begin{definition}[Contextual Rational Closure]
    \label{definition:contextual-rational-closure}
    Given a ranked context $\mathbb{R}_\texttt{OR}^\Delta$, obtained through \texttt{ObjectRank} applied to $\Delta$ and a formal context $(G,M,I)$, we state that $\phi \twiddle \psi$ is in the \emph{contextual rational closure} of $\mathbb{R}_\texttt{OR}^\Delta$ if and only if $\mathbb{R}_\texttt{OR}^\Delta$ satisfies $\phi \twiddle \psi$. We denote this by $\mathbb{R}_\texttt{OR}^\Delta \dentails_{CRC} \phi \twiddle \psi$.
\end{definition}

That the definition of contextual rational closure describes a rational consequence relation follows immediately from the fact that it is the consequence relation described by some ranked context; and so, on both the object and meta level, ranked contexts satisfy the rationality postulates. 
\section{Discussion}\label{section:discussion}

We provide the following longer example as a demonstration of the \texttt{BaseRank} algorithm, as well as how the rational closure entailment relation in FCA differs from the propositional definition.

\begin{example}
\label{section:example}
The context describes a group of people $\{$\texttt{alice}, \texttt{bob}, ... ,\texttt{frank}$\}$ and corresponding attributes indicating friendship with a person. The (classical) implication $\texttt{fw. alice} \rightarrow \texttt{fw. bob}$ over attributes should be interpreted as ``Those who are friends with Alice are friends with Bob''. In turn, the defeasible counterpart $\texttt{fw. alice} \twiddle \texttt{fw. bob}$ describes the scenario where the typical friends of Alice are friends with Bob. 
\vspace{-1em}
\begin{figure}[H]
    \centering
    \begin{subfigure}[t]{0.4\textwidth}
        \centering
        \caption{A ranked context describing friendship among a set of individuals, derived from \texttt{ObjectRank}}
        \label{subfig:context}
        \begin{tabular}{|c|l|l|l|l|l|l|l|}
            \hline
            $\mathsf{R}$ &                  & \rotatebox{90}{\texttt{fw. alice}} & \rotatebox{90}{\texttt{fw. bob}} & \rotatebox{90}{\texttt{fw. charlie }} & \rotatebox{90}{\texttt{fw. david}} & \rotatebox{90}{\texttt{fw. eva}} & \rotatebox{90}{\texttt{fw. frank}} \\
            \hline
            \multicolumn{6}{c}{} \vspace{-0.9em} \\ \hline
            \texttt{0} & \texttt{bob}     &                                    & $\times$                         & $\times$                             & $\times$                           &                                  &                                    \\ \hline
                       & \texttt{eva}     &                                    & $\times$                         &                                      &                                    & $\times$                         &                                    \\ \hline
            \texttt{1} & \texttt{charlie} &                                    &                                  & $\times$                             &                                    &                                  &                                    \\ \hline
                       & \texttt{frank}   & $\times$                           & $\times$                         & $\times$                             &                                    &                                  &                                    \\ \hline
            \texttt{2} & \texttt{alice}   & $\times$                           &                                  & $\times$                             &                                    & $\times$                         & $\times$                           \\ \hline
                       & \texttt{david}   & $\times$                           &                                  &                                      & $\times$                           & $\times$                         & $\times$                           \\ \hline
            \specialrule{1.25pt}{0pt}{0pt}
        \end{tabular}
    \end{subfigure}
    \hfill
     \begin{minipage}[t]{0.48\textwidth}
        \begin{subfigure}[t]{\textwidth}
            \caption{A set $\Delta$ of defeasible conditionals describing plausible friendship relations, used for the \texttt{ObjectRank} algorithm}
            \label{subfig:delta}
            \[
              \Delta =
              \left\{
              \begin{aligned}
                   & \texttt{fw. alice} \twiddle \texttt{fw. bob}, \\[0.3em]
                   & \texttt{fw. charlie} \twiddle \texttt{fw. david}, \\[0.3em]
              \end{aligned}
              \right\}
            \]
        \end{subfigure}
        
        \vspace{3em}  
        \begin{subfigure}[t]{\textwidth}
            \captionsetup{justification=raggedright, singlelinecheck=false}
            \caption{Rational Entailment from $\mathbb{R}_\texttt{OR}^\Delta$}
            \label{subfig:new}
              \(\mathbb{R}_\texttt{OR}^\Delta \dentails_\texttt{CRC} \texttt{fw. david} \twiddle \texttt{fw. charlie}\), but \\ \(\mathbb{R}_\texttt{OR}^\Delta \ndentails_\texttt{CRC}\texttt{fw. david$\;\land$fw. eva} \twiddle \texttt{fw. charlie}\)
        \end{subfigure}       
    \end{minipage}
    \label{fig:example-context-delta}
    \caption{Illustration of contextual rational closure.}
\end{figure}
\vspace{-1em}
\end{example}

That the contextual rational closure entailment relation is indeed non-monotonic, observe that in \Cref{subfig:context}---which defines the the contextual rational closure of $(G,M,I)$ and $\Delta$--- $\mathbb{R}^\Delta_\texttt{OR} \dentails \texttt{fw. eva} \twiddle \texttt{fw. bob}$ holds. If the defeasible conditional $\texttt{fw. eva} \twiddle \texttt{fw. frank}$ were added to $\Delta$, the new ordering derived from \texttt{ObjectRank} would be the same for all objects except for \texttt{eva}, which would occupy the newly most exceptional rank, $3$. The entailment $\texttt{fw. eva} \twiddle \texttt{fw. bob}$ no longer holds retracted, while we gain a new consequence: $\mathbb{R}^\Delta_\texttt{OR} \dentails \texttt{fw. eva} \twiddle \texttt{fw. alice}$.

This example provides a good representation of the distinction between rational closure in the propositional case, where the all valuations are present, and our more specialised construction in FCA. If the set of objects in \Cref{fig:context} were extended so as to become representative of the entire attribute space, every defeasible conditional in $\Delta$ would be non-trivially satisfied on the $0^{th}$ rank: the object with every attribute would satisfy every conditional in $\Delta$. This is not a problem per se; but rather illustrates how reasoning with respect to a formal context allows one to restrict consideration to specific information is more appropriate. 
\vspace{-1em}
\section{Related Works}\label{section:related-works}
As mentioned in the introduction, the inclusion of defeasibility into FCA represents a relatively new line of research. This paper describes a further development of the ideas initially presented by Carr et al. \cite{carr2024non}. Independently of this work, Ding et al. \cite{defeasibleReasoningOnConcepts} and Liang et al. \cite{KLM-style-concepts} developed an approach to KLM-style defeasible reasoning in FCA. While their work falls within a similar domain, it focuses on developing a defeasible relation between concepts (i.e., allowing exceptions in the sub/super-concept relation), whereas our work discusses defeasible attribute implications. Moreover, their approach is limited to \textit{cumulative consequence relations}, which do not support negation or disjunction, whereas we explore preferential and rational consequence relations. 
\section{Conclusions \& Future Work }
\label{section:conclusion}

In this paper, we provide an approach to defeasible reasoning in FCA that serves as a faithful reconstruction of the KLM framework, originally formulated for propositional logic. This facilitates the discovery of \textit{defeasible} dependencies between sets of attributes in FCA, modelled as implication-like defeasible conditionals. These dependencies are characterised by rational consequence relations, and follow a describable pattern of reasoning. 

We extend this result to the more abstract notion of non-monotonic entailment; we provide an interpretation of rational closure for the FCA setting. Our approach diverges from the original definition in the sense that a formal context allows consideration to be given only to the objects in a particular context, rather than the set of all possible objects; the latter scenario being a more direct translation of rational closure. This yields an entailment relation based on information pertinent to the domain, omitting possible, but not present information from consideration.

Future avenues of interest include developing a perspective on \textit{typical concepts}---which incorporate the preference relation on objects in their construction---and their corresponding concept lattices.An early notion of this was introduced in a previous paper \cite{carr2024non}, but there is much room for further investigation. More immediately, it would certainly be useful to investigate complexity results of contextual rational closure. Moreover, rational closure is one of many existing notion of defeasible entailment; exploring others, such as \textit{lexicographic} \cite{lehmann2002perspectivedefaultreasoning} and \textit{relevant closure} \cite{casisini_relevant}, may also prove interesting. 

\bibliographystyle{splncs04}
\bibliography{references}
\clearpage
\section*{Appendix}
\label{section:appendix}


\Equivalence*

\begin{proof}
    We begin with the \textit{if} part and assume $\minDerivation{\phi} \subseteq \psi$. For an object $g \in \minDerivation{\phi}$ it is clear that $g \in \phi^\downarrow \cap \psi^\downarrow$. We assume that $g \not \in \underline{\phi^\downarrow \cap \psi^\downarrow}$ and thus the existence of some $h \in \phi^\downarrow \cap \psi^\downarrow$ with $h \prec g$. By basic set theory it should follow that $h \in \phi^\downarrow$, but this causes a contradiction since $g$ is minimal in $\phi^\downarrow$ but $h \prec g$. So no such $h$ can exist, which implies $g \in \underline{\phi^\downarrow \cap \psi^\downarrow}$. Then, we assume $g \in \underline{\phi^\downarrow \cap \psi^\downarrow}$. If $g \not \in \minDerivation{\phi}$ there exists some $h \in \minDerivation{\phi}$ such that $h \prec g$, but we have just shown $\minDerivation{\phi} \subseteq \underline{\phi^\downarrow \cap \psi^\downarrow}$. So $h \in \underline{\phi^\downarrow \cap \psi^\downarrow}$, which is a contradiction since $g \in \underline{\phi^\downarrow \cap \psi^\downarrow}$. Therefore, $\underline{\phi^\downarrow \cap \psi^\downarrow} \subseteq \minDerivation{\phi}$   

    The \textit{only if} part of the proof is trivial. 
    
\end{proof}

\SoundnessP*

\begin{proof}
	The proofs for (REF), (RW), and (AND) are quite obvious and thus omitted. (LLE) is also obvious when we recognise that two attributes are equivalent when they have the same extension. 
    For the (OR) postulate, we have that $\phi \twiddle \gamma$ and $\psi \twiddle \gamma$, and so $\minDerivation{\phi} \subseteq \gamma^\downarrow$ and $\minDerivation{\psi} \subseteq \gamma^\downarrow$. For some $g \in \underline{\phi^\downarrow \cup \psi^\downarrow}$ it follows that $g \in \phi^\downarrow$ or $g \in \psi^\downarrow$ (or both), and that there cannot be some $h \in \phi^\downarrow \cup \psi^\downarrow$ such that $h \prec g$.  If $g \in \phi^\downarrow$ then $g \in \minDerivation{\phi}$, and similarly if $g \in \psi^\downarrow$ then $g \in \minDerivation{\psi}$.  In either case, by our assumption it follows that $g \in \gamma^\downarrow$.  Since $g$ was an arbitrary, we have that $\underline{\phi^\downarrow \cup \psi^\downarrow} \subseteq \gamma^\downarrow$, which is precisely $\phi \lor \psi \twiddle \gamma$. 
    To show (CUT) we assume $\minDerivation{\phi} \subseteq \psi^\downarrow$ and $\underline{\phi^\downarrow \cap \psi^\downarrow} \subseteq \gamma^\downarrow$. By \Cref{lemma1} we have $\minDerivation{\phi} = \underline{\phi^\downarrow \cap \psi^\downarrow}$. Then from equality we get $\minDerivation{\phi} \subseteq \gamma^\downarrow$ which is the condition for $\phi \twiddle_\mathbb{P} \gamma$. 
    The proof for (CM) is similar to (CUT). We assume $\minDerivation{\phi} \subseteq \psi^\downarrow$ and $\minDerivation{\phi} \subseteq \gamma^\downarrow$. Once again we use \Cref{lemma1} and get $\minDerivation{\phi} = \underline{\phi^\downarrow \cap \psi^\downarrow}$. Then, by set equality $\underline{\phi^\downarrow \cap \psi^\downarrow} \subseteq \gamma^\downarrow$ which gives us $\phi \land \psi \twiddle \gamma$.
    
\end{proof}

\Correspondence*

\begin{proof}
    By construction of a derived context, an object $g_s \Vdash m$ if and only if the corresponding state satisfies the reciprocal propositional atom, $l(s) \Vdash m_\mathcal{M}$. That this correspondence holds for conjunction is clear. If $l(s) \Vdash \neg m_\mathcal{M}$ then we have $l(s) \nVdash m_\mathcal{M}$ and by definition of a derived context, $g_s \nVdash m$; it then follows that $g \in G \setminus m^\downarrow$ and so $g \Vdash \neg m$. For disjunction, let $l(s) \Vdash m_\mathcal{M} \lor n_\mathcal{M}$ then it needs to be shown that $g_s \in m^\downarrow \cup n^\downarrow$. Let $l(s)\Vdash m_\mathcal{M}$, then, from before, $g_s \Vdash m$ and $g_s \in m^\downarrow \cup n^\downarrow$. The same argument applies to the case where $l(s) \Vdash n_\mathcal{M}$. 
    
\end{proof}

\DefeasibleCorrespondence*

\begin{proof}
    Let $\mathbb{P}^{\mathcal{P}}=(G,M,I, \prec_*)$ and assume for some $g_s\in G$ assume $g_s\notin \underline{\alpha^{\downarrow}}$. Equivalently, either $g_s\notin\alpha^{\downarrow}$ or $g_s\in\alpha^{\downarrow}$ and there exists some $g_t\in G$ with $g_t\prec_* g_s$. In the first case $l(s)\notin \llbracket\alpha_\mathcal{M}\rrbracket$. In the second case, this is equivalent to $l(s)\in\llbracket\alpha_\mathcal{M}\rrbracket$, and there exists $t\in S$ such that $t\prec s$ and $l(t)\Vdash \alpha_\mathcal{M}$. That is, $g_s\notin \underline{\alpha^{\downarrow}}$ iff. $s\notin \min_{\prec}\llbracket\alpha_\mathcal{M}\rrbracket$.\\
    
    Then $\phi_{\mathcal{M}}\twiddle\psi_{\mathcal{M}}$, iff. $\min_{\prec}\llbracket\phi_\mathcal{M}\rrbracket\subseteq \llbracket\psi_{\mathcal{M}}\rrbracket$. By the previous paragraph $g_s\in \underline{\phi^{\downarrow}}$ iff. $s\in \min_{\prec}\llbracket\phi_\mathcal{M}\rrbracket$, and from Lemma \ref{lemma:correspondence-propositional-compound} we have that $g_s\in\psi^\downarrow$ iff. $s\in\llbracket\psi_{\mathcal{M}}\rrbracket$. Therefore, $\min_{\prec}\llbracket\phi_\mathcal{M}\rrbracket\subseteq\llbracket\psi_{\mathcal{M}}\rrbracket$ iff. $\underline{\phi^\downarrow}\subseteq \psi^\downarrow$, which is equivalent to $\mathbb{P}^{\mathcal{P}}$ satisfying $\phi\twiddle\psi$.
    
\end{proof}

\CompletenessP*

\begin{proof}
    By \Cref{theorem:propositional-completeness} we have that a consequence relation $\twiddle_\mathcal{P}$ is preferential if and only if it the consequence defined by some preferential interpretation $\mathcal{P}$. We omit the proof for this as it is a well known result from \cite{KLM2002nonmonotonic}. Then, by the correspondence result between preferential interpretations and preferential contexts, given by \Cref{lemma:correspondence-propositional-compound} and \Cref{lemma:correspondence-conditionals-propositional-preferential}, we have that a consequence relation $\twiddle_\mathcal{P}$ is preferential if and only if it is the consequence relation defined by a preferential context $\mathbb{P}$. 
    
\end{proof}

\SoundnessR*

\begin{proof}
    We assume that $\phi \twiddle_\mathbb{R} \psi$ and $\phi \ntwiddle_\mathbb{R} \neg \gamma$. By the second assumption there exists at least one object $g \in \minDerivation{\phi}$ such that $g \in \gamma^\downarrow$. It follows that $g \in \underline{\phi^\downarrow \cap \gamma^\downarrow}$. Any other $h \in \underline{(\phi^\downarrow \cap \gamma^\downarrow)}$ is incomparable to $g$ and so there can be no $i \in \phi^\downarrow$ such that $i \prec h$, as then modularity would require that $i \prec g$, but $g \in \minDerivation{\phi}$. Thus $h \in \minDerivation{\phi}$. Then by our first assumption it follows that $h \in \psi^\downarrow$. All elements in $\underline{\phi^\downarrow \cap \gamma^\downarrow} \subseteq \psi^\downarrow$.
    
\end{proof}

\DefeasibleCorrespondenceRanked*

\begin{proof}
    Let $\mathbb{R}^\mathcal{R} = (G,M,I,\prec_*)$ be a derived ranked context where $\mathsf{R}_*$ is the ranking function that induces $\prec$. For some $g_s \in G$ assume $g_s \not \in \minDerivation{\phi}$. Then either $g_s\not \in \phi^\downarrow$ or $g_s \in \phi^\downarrow$ and there exists some $g_t \in \phi^\downarrow$ with $\mathsf{R}_*(g_t) < \mathsf{R}_*(g_s)$. In the first case, $l(s) \not \in \llbracket \phi_\mathcal{M}\rrbracket$. The second case is equivalent to $l(s) \in \llbracket\phi_\mathcal{M}\rrbracket$ and there exists some $t \in S$ such that $\mathsf{R}(t) < \mathsf{R}(s)$ with $t \Vdash \phi$. 

    Then, $\phi_\mathcal{M} \twiddle \psi_\mathcal{M}$ if and only if for every minimal state $v \in \llbracket\phi_\mathcal{M}\rrbracket$ it holds that $v \in \llbracket \psi_\mathcal{M} \rrbracket$. Then, some object $g_s \in \minDerivation{\phi}$ if and only if $s \in \llbracket \phi_\mathcal{M} \rrbracket$, and from \Cref{lemma:correspondence-propositional-compound} it holds that $g_s \in \psi^\downarrow$ if and only if $s \in \llbracket \psi_\mathcal{M} \rrbracket$. Therefore, $\min_{\prec}\llbracket\phi_\mathcal{M}\rrbracket\subseteq\llbracket\psi_{\mathcal{M}}\rrbracket$ if and only if $\minDerivation{\phi} \subseteq \psi^\downarrow$ which is equivalent to $\mathbb{R}^\mathcal{R}$ satisfying $\phi \twiddle \psi$. 
    
\end{proof}

\CompletenessR*

\begin{proof}
    The proof follows the same construction as the completeness result for preferential contexts. Lehmann \& Magidor \cite{what-does-a-conditional-knowledge-base-entail} show that a consequence relation is rational if and only if it the consequence relation induced by some ranked interpretation. We show in \Cref{lemma:correspondence-ranked} that for any ranked interpretation a ranked context can be constructed which induces a corresponding consequence relation. It follows that a consequence relation is rational if and only if it is the relation that can be induced by a ranked context. 
    
\end{proof}
\end{document}